\def\eqref#1{equation~\ref{#1}}
\def\1{\bm{1}}
\DeclareMathAlphabet{\mathsfit}{\encodingdefault}{\sfdefault}{m}{sl}
\SetMathAlphabet{\mathsfit}{bold}{\encodingdefault}{\sfdefault}{bx}{n}
\newcommand{\R}{\mathbb{R}}
\newcommand{\savespace}[1]{\textcolor{red}{!}}
\newcommand{\AP}{\text{AP}}
\newcommand{\mdp}{\mathcal{M}}
\newcommand{\buchistates}{\mathcal{B}}
\newcommand{\buchi}{\mathbb{B}}
\newcommand{\states}{\mathcal{S}}
\newcommand{\buchiaccepts}{\buchistates^*}
\newcommand{\actions}{\mathcal{A}}
\newcommand{\taskspec}{\varphi}
\newcommand{\cycles}{\mathcal{C}}
\newcommand{\maips}{\mathcal{P}}
\newcommand{\rhomax}{\rho_{\text{max}}}
\newcommand{\rhomin}{\rho_{\text{min}}}
\newcommand{\reward}{\mathcal{R}}
\newcommand{\sampletau}{\tau \sim \mdp^{\taskspec}_{\pi}}
\newcommand{\labelingfunction}{L^\mdp}
\newcommand{\buchiword}{B{\"u}chi }
\newcommand{\rltl}{r_{\text{LTL}}}
\newcommand{\rmdp}{r_{\text{MDP}}}
\newcommand{\rdual}{r_{\text{DUAL}}}
\newcommand{\vmax}{V_{\max}}
\newcommand{\rcycler}{r_{\text{CyclER}}}
\newcommand{\rcyclertau}{r^{\tau}_{\text{CyclER}}}
\newtheorem{theorem}{Theorem}[section]
\newtheorem{lemma}[theorem]{Lemma}
\newtheorem{assumption}[theorem]{Assumption}
\theoremstyle{definition}
\newtheorem{definition}{Definition}[section]
\title{LTL-Constrained Policy Optimization with Cycle Experience Replay}
\author{\name Ameesh Shah$^*$ \email ameesh@berkeley.edu \\
      \addr UC Berkeley
      \AND
      \name Cameron Voloshin$^*$ \email cavoloshin@lat.ai \\
      \addr Latitude AI
      \AND
      \name Chenxi Yang \email cxyang19@utexas.edu\\
      \addr UT Austin
      \AND
      \name Abhinav Verma \email verma@psu.edu\\
      \addr Penn State University
      \AND
      \name Swarat Chaudhuri \email swarat@cs.utexas.edu\\
      \addr UT Austin
      \AND
      \name Sanjit A. Seshia \email sseshia@eecs.berkeley.edu\\
      \addr UC Berkeley
      }
\begin{document}

\maketitle
\begin{abstract}
Linear Temporal Logic (LTL) offers a precise means for constraining the behavior of reinforcement learning agents. 
However, in many settings where both satisfaction and optimality conditions are present, LTL is insufficient to capture both. Instead, LTL-constrained policy optimization, where the goal is to optimize a scalar reward under LTL constraints, is needed. This constrained optimization problem proves difficult in deep Reinforcement Learning (DRL) settings, where learned policies often ignore the LTL constraint due to the sparse nature of LTL satisfaction. To alleviate the sparsity issue, we introduce Cycle Experience Replay (CyclER), a novel reward shaping technique that exploits the underlying structure of the LTL constraint to guide a policy towards satisfaction by encouraging partial behaviors compliant with the constraint. We provide a theoretical guarantee that optimizing CyclER will achieve policies that satisfy the LTL constraint with near-optimal probability.
We evaluate CyclER in three continuous control domains. Our experimental results show that optimizing CyclER in tandem with the existing scalar reward outperforms existing reward-shaping methods at finding performant LTL-satisfying policies.

\end{abstract}

\section{Introduction}
\label{sec:intro}
Significant research effort has explored \textit{Linear Temporal Logic} (LTL) as an alternative to Markovian reward functions for specifying objectives for reinforcement learning (RL) agents~\citep{sadigh2014learning, Hasanbeig2018lcrl, Camacho2019LTLAndBeyond, wang2020continuous, vaezipoor2021ltl2action, alur22ltlframework, degiacomo2020transducerstl, voloshin2023eventual}. 
LTL provides a flexible language for defining objectives, or \textit{specifications}, that are often not reducible to scalar Markovian rewards~\citep{Abel2021}. 
Unlike typical reward functions, objectives defined in LTL are composable, easily transferred across environments, and offer a precise notion of satisfaction. 

LTL specifications and Markovian reward functions have been used separately in a variety of RL settings, but few works consider both rewards \textit{and} specifications in the same setting~\citep{VoloshinLCP2022}. 
The combination of the two is important: an LTL specification can define the meaning of achieving a task, and a reward function can be optimized to find the best way of achieving that task. For example, in robot motion planning, an LTL specification can describe the waypoints a robot should reach and obstacles it should avoid, and a reward function can optimize for factors like energy consumption, stability of motion, and so forth. 

This work considers the problem setting of RL-based reward optimization under an LTL constraint. This constrained learning problem can be naturally formulated in an unconstrained form through a Lagrange-style relaxation \citep{le2019batch, achiam2017constrained} where the LTL constraint is represented by a \textit{proxy} reward function. Some versions of this proxy have been introduced in prior literature~\citep{hasanbeig2020deep, voloshin2023eventual, Camacho2019LTLAndBeyond}. However, these proxy rewards are sparse and difficult to optimize. As a result, learned policies in practice often end up ignoring the LTL constraint entirely and focus only on optimizing the reward function. 
A few existing works attempt to circumvent this sparsity issue by considering alternative temporal logics that provide a denser reward signal~\citep{Krasowski2023stlcontinuous, lavaei2021scltl, gundana2021stl} or by limiting their approach to discrete state spaces that can be solved exactly~\citep{VoloshinLCP2022}.


In this paper, we introducing a novel reward-shaping proxy for LTL called {\it Cycle Experience Replay} (CyclER) that alleviates the reward-sparsity issue of previous LTL proxy rewards, allowing us to scale to continuous state and action spaces with an objective readily optimizable by deep RL (DRL) approaches. We briefly summarize the CyclER approach as follows. Given the (known) automaton structure representing an LTL objective, CyclER computes all possible infinite paths, or cycles, through the automaton that define accepting behaviors for our task. Then, when an agent collects a finite episode of experience in the world, CyclER counterfactually reasons over that experience by considering how much progress it made through each cycle. The cycle that the agent progressed through the furthest is then used to shape LTL reward. We provide a visualization of this approach in Figure~\ref{fig:flatworld_example}.

Under certain assumptions, CyclER maintains theoretical guarantees on LTL optimality that are competitive with state-of-the-art LTL proxy rewards~\citep{voloshin2023eventual}. A key advantage of CyclER is how it readily incorporates quantitative semantics (QS), a popular technique for reward design in temporal logic that has yet to be extended to infinite-horizon LTL tasks. Our empirical results demonstrate CyclER's effectiveness during policy learning.





In summary, this paper makes the following contributions. We present a problem formulation for LTL-constrained policy optimization in the presence of continuous spaces and function approximators for use in deep RL.

We propose a technique for this problem setting, CyclER, that alleviates the proxy reward sparsity issue and provides guarantees that ensure approximate optimality of LTL satisfaction. 

Third, we introduce a new way to leverage quantitative semantics in LTL reward shaping. 
Lastly, we present promising experimental results using CyclER in LTL-constrained optimization settings, outperforming existing approaches.


\section{Problem Setting}
\label{sec:problem}
\begin{figure*}[t] 
    \hspace{-1em}
    \begin{minipage}[b]{0.33\textwidth}
        \centering
        \includegraphics[width=0.55\linewidth, keepaspectratio]{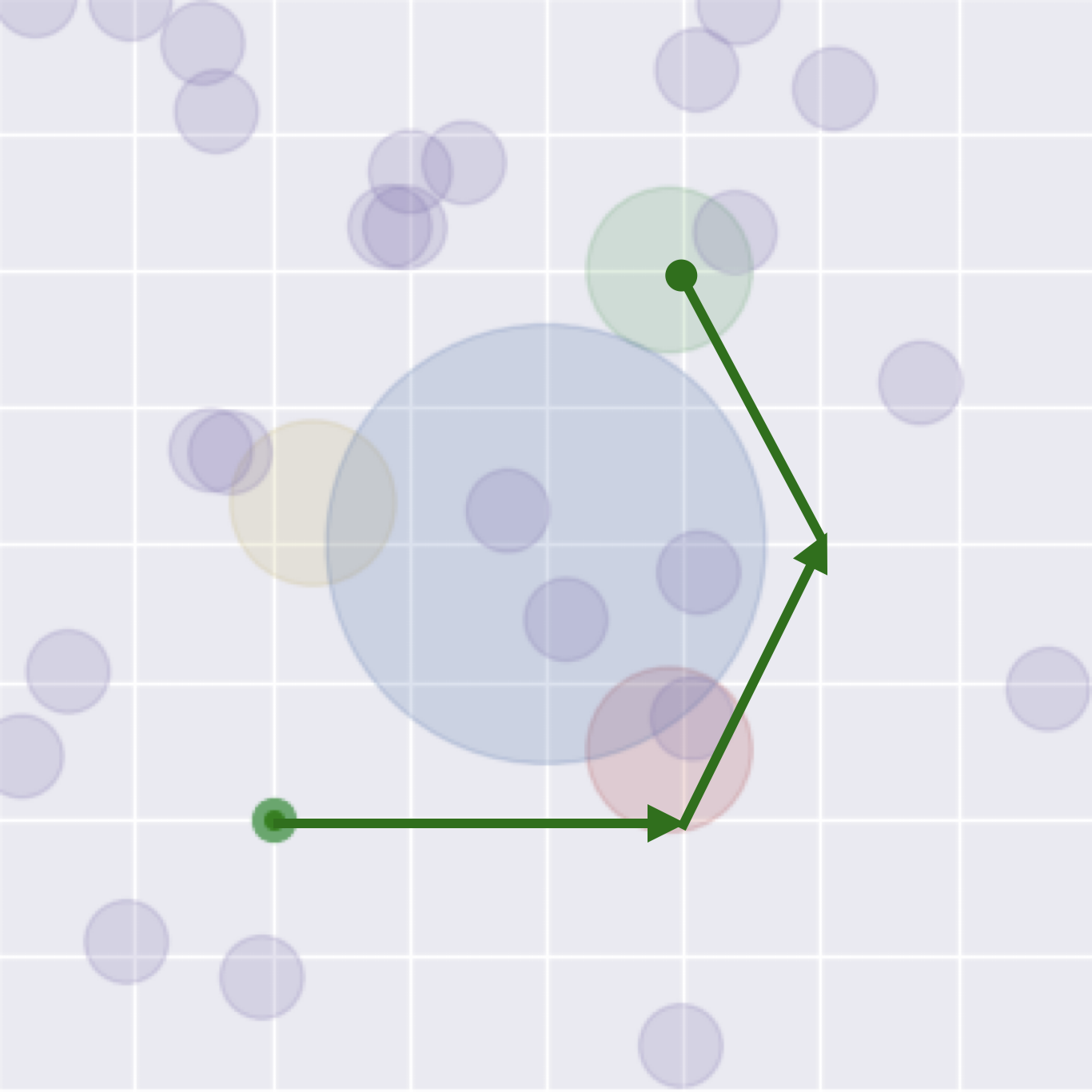}
    \end{minipage}
    \hspace{-2.3em}
    \begin{minipage}[b]{0.67\textwidth}
        \includegraphics[width=1.04\linewidth, keepaspectratio]{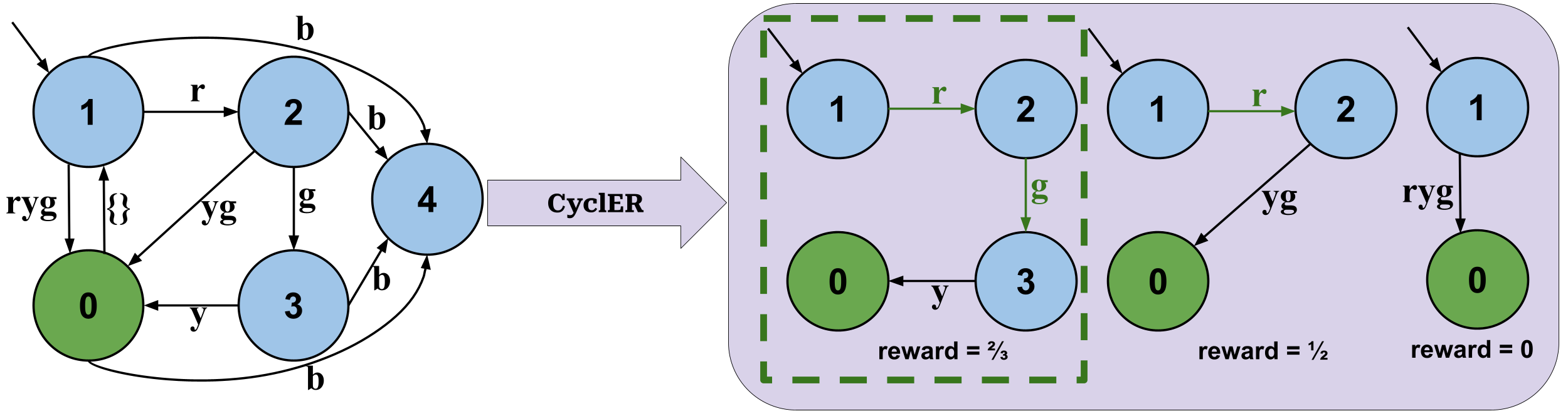}
    \end{minipage}
    \caption{Left: The FlatWorld MDP and an example trajectory. Right: An LDBA for the LTL formula $\taskspec = G (F(r) \& F(g) \&  F(y)) \& G( \neg b)$ (some edges omitted for readability); the accepting state 0 is coded in green. The CyclER method considers all accepting paths within an LDBA and selects the most reward-ful path for the trajectory to shape LTL reward. Unlike previous approaches~\citep{Camacho2019LTLAndBeyond, voloshin2023eventual}, CyclER offers dense reward, even without visiting the accepting state.}
    \vskip -1em
    \label{fig:flatworld_example}
\end{figure*}

\subsection{Preliminaries}
An \textit{atomic proposition} is a variable that takes on a Boolean truth value. We define an \textit{alphabet} $\Sigma$ as all possible combinations over a finite set of atomic propositions ($\AP$); that is, $\Sigma = 2^{\AP}$. For example, if $\AP = \{a, b\}$, then $\Sigma = \{\{a, b\}, \{b\}, \{a\}, \{\}\}.$ We will refer to individual combinations of atomic propositions, or predicates, in $\Sigma$ as $\nu$.

\paragraph{Labeled MDPs}
We formulate our environment as a labelled Markov Decision Process $\mdp = (\states, \actions, T^\mdp, d_0, \gamma, r, L^\mdp)$, containing a state space $\states$, an action space $\actions$, an \textit{unknown} transition function, $T^\mdp: \states \times \actions \rightarrow \Delta(\states)$, an initial state distribution $d_0 \in \Delta(\states)$, a discount factor $0 < \gamma < 1$, a reward function $r: \states \times \actions \rightarrow [R_{\min}, R_{\max}]$, and a labelling function $L^\mdp: \states \rightarrow \Sigma$. The labelling function returns which atomic propositions in our set $\AP$ are true for a given MDP state.

As a running example, consider the ``FlatWorld'' MDP in Figure~\ref{fig:flatworld_example} (left), where an agent (represented by the green dot) can move around the environment and visit different colored regions of interest. The agent is given a reward of 1 every time it visits one of the small purple regions. Our alphabet in FlatWorld is defined as $\AP = \{r, g, b, y\}$, corresponding to whether the agent is in the red, green, blue, or yellow region at any point in time.

\paragraph{Linear Temporal Logic (LTL)}Linear Temporal Logic~\citep{pnueli1977temporal} is a specification language that composes atomic propositions with logical and temporal operators to precisely define tasks. 
We use the symbol $\taskspec$ to refer to an LTL task specification, also called an LTL formula.

In LTL, specifications are constructed using both logical connectives: not $(\neg)$, and $(\&)$, and implies $(\rightarrow)$; and temporal operators: next $(X)$, repeatedly/always/globally $(G)$, eventually $(F)$, and until $(U)$. For more detail on the exact semantics of LTL operators, see~\citet{modelchecking}.

As an example, let us demonstrate task specifications in the ``FlatWorld'' environment from Figure~\ref{fig:flatworld_example} (left). LTL can easily define some simple objectives, such as safety $G(\neg b)$ (avoid the blue region), reachability $F(g)$ (reach the green region), or progress $F(y) \& X(F(r))$ (reach the yellow, then the red region). We can also combine operators to bring together these objectives into more complex specifications, such as $G (F(r) \& F(y) \&  F(g)) \& G( \neg b)$, which instructs an agent to oscillate amongst the red, yellow, and green regions indefinitely while avoiding the blue region. 

In order to determine the logical satisfaction of an LTL specification, we can transform it into a specialized automaton called a \textit{Limit Deterministic \buchiword Automaton (LDBA)}. See~\citet{Sickert2016ldba, hahn2013lazy, kvretinsky2018owl} for details on how LTL specifications can be transformed into semantically equivalent LDBA.

More precisely, a (de-generalized) LDBA is a tuple $ \mathbb{B} = (\buchistates, \Sigma , T^\mathbb{B}, \buchiaccepts, 
 \mathcal{E} ,b_{-1})$ with a set of states $\buchistates$, the alphabet $\Sigma$ of predicates $\nu$ that defines deterministic transitions in the automaton, a transition function $T^\mathbb{B} : \buchistates \times (2^\Sigma \cup \mathcal{E}) \rightarrow \buchistates$, a set of accepting states $\buchiaccepts$, and an initial state $b_{-1}$. An LDBA has separate deterministic and nondeterministic components $\buchistates = \buchistates_{D} \cup \buchistates_{N}$, such that $\buchiaccepts \subseteq \buchistates_{D}$, and for $b \in \buchistates_{D}$, $x \in \Sigma$ then $T^\mathbb{B}(b, x) \subseteq \buchistates_{D}$. $\mathcal{E}$ is a set of ``jump'' actions, also known as epsilon-transitions, for $b \in \buchistates_{N}$ that transitions to $\buchistates_{D}$ without evaluating any atomic propositions. A path $\xi = (b_0, b_1, \dots)$ is a sequence of states in $\buchistates$ reached through successive transitions under $T^\mathbb{B}$, not including the initial state $b_{-1}$, as the labeling function $L^\mdp(s_0)$ transitions the state of $\buchi$ to $b_0$ to be consistent with the initial state of the MDP.

\begin{definition}[Acceptance of $\xi$] \label{def:ltl_acceptance}
 We accept a path $\xi = (b_0, b_1, \dots)$ if an accepting state of the \buchiword automaton is visited infinitely often by $\xi$.
\end{definition}

\subsection{Problem Statement}
We would like to learn a policy that produces satisfactory (accepting) trajectories with respect to a given LTL formula $\varphi$ while maximizing $r$, the reward function from the MDP. Before we define our formal problem statement, we introduce more notation:

\begin{definition}[Product MDP] \label{def:productmdp}
    A \textbf{product MDP} synchronizes the MDP with an LDBA. Specifically, let $\mdp^{\taskspec}$ be an MDP with state space $\states^\taskspec = \states \times \buchistates$. Policies over our product MDP space can be defined as $\pi: \states^{\taskspec} \rightarrow \Delta(\actions^{\taskspec})$, where our new set of actions combine  $\actions^{\taskspec}((s,b)) = \actions(s) \cup \mathcal{E}$, to include the jump transitions in $\buchi$ as possible actions. We define the space of all possible policies as $\Pi$. The new probabilistic transition relation of our product MDP is defined as:
\end{definition} 
\begin{equation}
    T(s, b, a, s', b') = 
\begin{cases}
    T^\mdp(s, a, s')  &  a \in \actions(s), b' = T^\mathbb{B}(b, L(s')) \\
    1  & a \in \mathcal{E}, b' = T^\mathbb{B}(b, a), s = s' \\
    0 & \text{otherwise}
\end{cases}
\end{equation}

A policy generates trajectories $\tau = ((s_0, b_0, a_0), (s_1, b_1, a_1), \dots)$ in the product MDP. Define $\reward(\tau) \equiv \sum^{\infty}_{t = 0} \gamma^t r(s_t, a_t)$ as the total reward along a trajectory $\tau$. 

\begin{definition} [Trajectory acceptance]\label{def:product_acceptance}
     A trajectory is said to be \textbf{accepting} with respect to $\varphi$ ($\tau \models \varphi$, or ``$\varphi$ accepts $\tau$'') if there exists some $b \in \buchiaccepts$ that is visited infinitely often.
\end{definition} 

\begin{definition} [Policy satisfaction] \label{def:policy_satisfaction}
    A policy $\pi \in \Pi$ \textit{satisfies} $\varphi$ with some probability $\mathbb{P}[\pi \models \varphi] = \mathbb{E}_{\tau \sim \mdp^{\taskspec}_{\pi}}[\textbf{1}_{\tau \models \varphi}]$. Here, $\textbf{1}$ is an indicator variable that checks whether or not a trajectory $\tau$ is accepted by $\varphi$, and $\mdp^{\taskspec}_{\pi}$ is the distribution of trajectories induced by policy $\pi$ in a product MDP $\mdp^{\taskspec}$.
\end{definition}

\begin{definition} [Probability-optimal policies]\label{def:probability_optimal}
    We will denote $\Pi^*$ as the set of policies that maximize the probability of satisfaction with respect to $\varphi$; that is, the policies that have the highest probability of producing an accepted trajectory: $\Pi^* = \{\pi \in \Pi | \mathbb{P}[\pi \models \varphi] = \max_{\pi' \in \Pi} \mathbb{P}[\pi' \models \varphi]\}$.
\end{definition}

Our aim is to find a policy in the probability-optimal set $\Pi^\ast$ that collects the largest expected cumulative discounted reward. We state this constrained objective formally as follows:

\begin{equation} \label{eq:objective}
   \pi^* \in \text{argmax}_{\pi \in \Pi^\ast } \mathbb{E}_{\sampletau} \bigl[\reward(\tau)\bigr] 
\end{equation}

For notational convenience, we will refer to the MDP value function as $\reward_\pi \equiv \mathbb{E}_{\sampletau} \bigl[\reward(\tau)\bigr]$.

In certain cases, the probability-optimal set of policies $\Pi^*$ may be empty; consequently, a solution to~\ref{eq:objective} may not exist. In section~\ref{sec:constrained_opt}, we introduce a proxy objective with a similar potential of non-existence and discuss how our ultimate optimization objective behaves in this setting.

To align with our intended applications towards deep RL, we consider stochastic, memoryless policies over the product MDP. Such policies are capable of capturing probability-optimal policies for a given LTL specification if an optimal policy exists~\citep{bozkurt2020control, VoloshinLCP2022}.

\section{LTL-Constrained Policy Optimization}
\label{sec:constrained_opt}
Finding a policy within $\Pi^*$ is, in general, not tractable: an LTL constraint $\taskspec$ is defined over infinite-length trajectories but policy rollouts in practice produce only finite-length trajectories ~\citep{Yang2021Intractable}.
We adopt \textit{eventual discounting} ~\citep{voloshin2023eventual}, a common approach in the existing literature which aims to optimize a proxy value function that approximates the satisfaction of $\varphi$. Eventual discounting is defined as:

\begin{equation}\label{eq:eventualdiscounting}
\begin{aligned}
    V_{\pi} &= \mathop{\mathbb{E}}_{\sampletau} \biggr[ \sum^{\infty}_{t = 0} \Gamma_t \rltl(b_{t})\biggl] , &\quad
    \Gamma_t = \gamma_\taskspec^{j_t}, &\quad
    \rltl(b_t) =
    \begin{cases}
        1 & \text{if } (b_t \in \buchiaccepts)\\
        0 & \text{otherwise }
    \end{cases}
\end{aligned}
\end{equation}
where $j_t = \sum_{k=0}^t \rltl(b_k)$ counts how many times the set $\buchi^\ast$ has been visited (up to and including the current timestep). Notably, eventual discounting does not discount based on the amount of time between visits to an accepting state. A policy that maximizes this eventual discounting reward is approximately probability-optimal with respect to $\varphi$ when $\gamma_\taskspec$, the discounting factor associated with $\varphi$, is selected properly (see Theorem 4.2 in~\citet{voloshin2023eventual} for an exact bound). 

As a result of eventual discounting, we can replace $\Pi^\ast$ in objective~\ref{eq:objective} with the set of policies that maximize $V_{\pi}$. Let $\vmax = \max_{\pi \in \Pi} V_{\pi}$ be the maximal value.

\begin{equation} \label{eq:new_objective}
\begin{aligned}
   \pi^* = \operatorname*{argmax}_{ \pi \in \{ \pi \in \Pi | V_{\pi} = \vmax\} } \bigl[ \reward_\pi \bigr]
\end{aligned}
\end{equation}

We now form the Lagrangian dual of objective~\ref{eq:new_objective} as

$
   \pi^* = \min_{\lambda} \operatorname*{argmax}_{\pi \in \Pi } 
   \bigl[ \reward_\pi + \lambda (V_{\pi} - V_{\max}) \bigr]
$. In theorem~\ref{lambdalemma} we show that because we only care about constraint-maximizing policies, there exists $\lambda^\ast \in \mathbb{R}$ such that solving the inner maximization of the Lagrangian dual must be constraint optimal for any fixed $\lambda > \lambda^\ast$. Intuitively, the higher $\lambda$ is, the more our learned policy will account for $V_{\pi}$ during optimization until the constraint must be satisfied. At that point, because we are already achieving the maximum possible $V_{\pi}$, any additional lift will only come from maximizing over the MDP value $\mathcal{R}$, even if we continue to increase $\lambda$. With this observation, we can form an unconstrained objective function from objective~\ref{eq:new_objective} to be the following:

\begin{equation} \label{eq:true_objective}
\begin{aligned}
   \pi^* =  \arg\max_{\pi \in \Pi } \bigl[ \reward_\pi  + \lambda V_{\pi}  \bigr]
\end{aligned}
\end{equation}
where we have dropped the dependence on $\vmax$ since it is a constant and fixed $\lambda > \lambda^\ast$. We show that under certain assumptions, an exact value for $\lambda^\ast$ can be found to ensure that a policy that maximizes eq.~\ref{eq:true_objective} will certainly maximize $ V_{\pi} $.

\begin{assumption} \label{gap_assumption}
There exists a positive nonzero gap $\epsilon > 0$ between the value $V_{\pi}$ of policies in $\pi \in \Pi^*$ and the highest-value policies that are not; that is, $\vmax - \max_{\pi \in (\Pi \setminus \Pi^*)}(V_{\pi}) > \epsilon$.
\end{assumption}

\begin{theorem}\label{lambdalemma}
Under Assumption \ref{gap_assumption}, for any choice of $\lambda > \frac{R_{\max} - R_{\min}}{\epsilon (1 - \gamma)}$, the solution to objective~\ref{eq:true_objective} must be a solution to objective~\ref{eq:new_objective}. See Appendix Section~\ref{sec:lambda_proof} for the proof. 
\end{theorem}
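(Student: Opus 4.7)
The plan is to prove Theorem \ref{lambdalemma} by a short contradiction-style comparison of the Lagrangian values of an arbitrary maximizer $\hat{\pi}$ of objective~\ref{eq:true_objective} against any policy $\pi^\ast \in \Pi^\ast$, using Assumption~\ref{gap_assumption} to force $\hat{\pi}$ itself into $\Pi^\ast$. Then a second, almost trivial, step will show that within $\Pi^\ast$ maximizing the Lagrangian coincides with maximizing $\reward_\pi$, yielding a solution to objective~\ref{eq:new_objective}.

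First, I would record the uniform bounds $\frac{R_{\min}}{1-\gamma} \le \reward_\pi \le \frac{R_{\max}}{1-\gamma}$ for every $\pi \in \Pi$, which follow immediately from $r \in [R_{\min}, R_{\max}]$ and geometric summation with discount $\gamma$. The point of this bound is that no policy can gain more than $(R_{\max}-R_{\min})/(1-\gamma)$ in MDP return over any other, so a large enough $\lambda$ weight on $V_\pi$ will dominate any such gain.

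Next, the core step. Let $\hat{\pi} \in \arg\max_{\pi \in \Pi}\bigl[\reward_\pi + \lambda V_\pi\bigr]$ and fix any $\pi^\ast \in \Pi^\ast$, so that $V_{\pi^\ast} = \vmax$. By optimality of $\hat{\pi}$,
\begin{equation*}
\reward_{\hat{\pi}} + \lambda V_{\hat{\pi}} \;\ge\; \reward_{\pi^\ast} + \lambda \vmax,
\end{equation*}
which rearranges to $\lambda(\vmax - V_{\hat{\pi}}) \le \reward_{\hat{\pi}} - \reward_{\pi^\ast} \le \tfrac{R_{\max}-R_{\min}}{1-\gamma}$. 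Assume for contradiction $\hat{\pi} \notin \Pi^\ast$. Assumption~\ref{gap_assumption} then gives $\vmax - V_{\hat{\pi}} > \epsilon$, so $\lambda \epsilon < \tfrac{R_{\max}-R_{\min}}{1-\gamma}$, i.e., $\lambda < \tfrac{R_{\max}-R_{\min}}{\epsilon(1-\gamma)}$, contradicting the hypothesis on $\lambda$. Hence $\hat{\pi} \in \Pi^\ast$ and in particular $V_{\hat{\pi}} = \vmax$.

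Finally, for any $\pi' \in \Pi^\ast$ we have $V_{\pi'} = V_{\hat{\pi}} = \vmax$, so the inequality $\reward_{\hat{\pi}} + \lambda V_{\hat{\pi}} \ge \reward_{\pi'} + \lambda V_{\pi'}$ reduces to $\reward_{\hat{\pi}} \ge \reward_{\pi'}$. Combined with $\hat{\pi} \in \Pi^\ast$, this is exactly the definition of a solution to objective~\ref{eq:new_objective}, completing the proof. The proof is essentially mechanical; the only subtle point, which I expect to be the main (modest) obstacle, is cleanly separating the two conclusions: first that the Lagrangian maximizer lies in the constraint-optimal set, and second that it is $\reward$-best within that set. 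The gap assumption does the heavy lifting for the first, and equality of $V_\pi$ across $\Pi^\ast$ makes the second immediate.
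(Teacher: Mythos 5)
Your proof is correct and follows essentially the same route as the paper's: both arguments hinge on bounding the achievable MDP-return difference by $\tfrac{R_{\max}-R_{\min}}{1-\gamma}$ and showing that $\lambda\epsilon$ exceeds it, forcing the Lagrangian maximizer into $\Pi^\ast$, after which equality of $V_\pi$ on $\Pi^\ast$ reduces the Lagrangian to a comparison of $\reward_\pi$. The only difference is cosmetic — you phrase the first step as a contradiction while the paper writes it as a direct chain of inequalities.
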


We note that Assumption~\ref{gap_assumption} can be found in previous literature \citep{voloshin2023eventual} and serves as a sufficient but not necessary condition for our results. We provide further analysis for the existence of Assumption~\ref{gap_assumption} in Appendix Section~\ref{sec:gap_existence}.

As briefly mentioned in Section~\ref{sec:problem}, the probability-optimal set of policies with respect to $\varphi$ may be empty. The same is true for our updated definition of $\Pi^*$ that contain policies that achieve $\vmax$. In the case of this non-existence, Assumption~\ref{gap_assumption} does not hold, and a policy that optimizes~\ref{eq:true_objective} will prioritize improving $V_\pi$ at the potential expense of $\reward_\pi$. We provide an extended discussion of this consequence in Section~\ref{sec:solution_existence}.

\textbf{Empirical Considerations.} Since the conditions for Assumption \ref{gap_assumption} are often unknown, there may not be a verifiable way to know that that our learned policy is maximizing $V_{\pi}$.
Because of this, we will treat $\lambda$ as a tunable hyperparameter that allows a user to trade off the relative importance of empirically satisfying the LTL constraint. 
There are a number of strategies one can use to find an appropriate $\lambda$: for example, one can iteratively increase $\lambda$ until a desired LTL reward is achieved. In our experiments, we show an example of this trade off, and notice that the trade off lessens in severity once $\lambda$ exceeds a value that enables learning LTL-satisfying policies (table~\ref{tab:lambda_vary_table}).

\section{Cycle Experience Replay (CyclER)}
\label{sec:cycler_section}

To distinguish between the MDP's reward function and the eventual-discounting proxy reward in~\ref{eq:eventualdiscounting}, we write the MDP reward function $r(s,a)$ as $\rmdp(s,a)$. In Deep RL settings, we maximize objective~\ref{eq:true_objective} using the reward function $\rdual(s_t,b_t,a_t) = \gamma^t \rmdp(s_t,a_t) +  \Gamma_t \lambda \rltl(b_t)$. 

However, optimizing objective~\ref{eq:true_objective} is challenging due to the sparsity of $\rltl$. $\rltl$ is nonzero only when an accepting state in $\buchi$ is visited, which may require a long, precise sequence of actions. 

Consider the FlatWorld MDP and LDBA in Figure~\ref{fig:flatworld_example}. The MDP's reward function incentivizes visiting the small purple regions in the world. Under $\rltl$, a policy will receive no reward until it completes the entire task of avoiding blue and visiting the red, yellow, and green regions through random exploration. 
If $\rmdp$ is dense, a policy may fall into an unsatisfactory `local optimum' by optimizing for $\rmdp$ it receives early during learning, and ignore $\rltl$ entirely. In Figure~\ref{fig:flatworld_ex_trajs}, we see that a policy trained on $\rdual$ makes such an error.

We seek to address this shortcoming by \textit{automatically} shaping $\rltl$ so that a more dense reward for $\taskspec$ is available during training. Below, we present our approach, which exploits the known structure of the LDBA $\buchi$ and cycles within $\buchi$ that visit accepting states.

\subsection{Rewarding Accepting Cycles in $\buchi$}

\begin{wrapfigure}{h}{0.3\textwidth}
    \centering
    \vskip -4.5em
    \begin{minipage}{0.49\linewidth}
        \centering
        \includegraphics[width=0.95\textwidth]{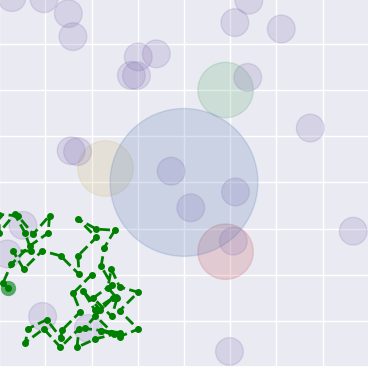}
    \end{minipage}\hfill
    \begin{minipage}{0.49\linewidth}
        \centering
        \par
        \includegraphics[width=0.95\textwidth]{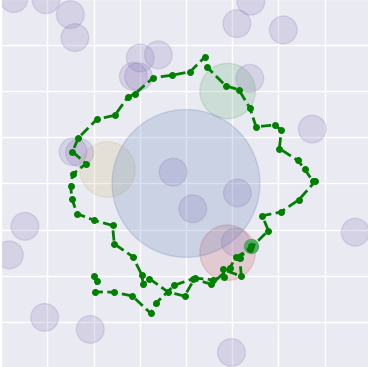}
    \end{minipage}\hfill
    \vskip -0.5em
    \caption{Trajectories from unshaped $\rdual$ (left) and CyclER $\rdual$ (right) for the formula $G (F(r) \& F(g) \&  F(y))$  $\&$  $G( \neg b)$.}
    \vskip -1em
\label{fig:flatworld_ex_trajs}
\end{wrapfigure}

By definition of LTL satisfaction (def.~\ref{def:product_acceptance}), a trajectory must repeatedly visit an accepting state $b^*$ in an LDBA. In the context of the automaton itself, that means that an accepting trajectory will traverse an \textit{accepting path} from the initial state to an accepting state, and then repeatedly traverse \textit{accepting cycles} within $\buchi$ that continually visit accepting states. 
\begin{definition}[Accepting Initial Path (AIP)]
An accepting initial path in $\buchi$ is a set of valid transitions $(b_i, \nu,  b_j)$ (i.e., the predicate $\nu$ that transitions $b_i$ to $b_j$) in $\buchi$ that starts at the initial state $b_{-1}$ and ends at an accepting state $b^*_k \in \buchiaccepts$. 
\end{definition}

\begin{definition}[Accepting Cycle (AC)] 
An accepting cycle in $\buchi$ is a set of valid transitions $(b_i, \nu,  b_j)$ in $\buchi$ that start and end at accepting states $b^*_k, b^*_l \in \buchiaccepts$.\footnote{Our usage of the word ``cycle'' is not a cycle in the traditional sense of graph search, but instead refers to paths that connect two accepting states in $\buchi$ (allowing for ``cyclical'' acceptance).} 
\end{definition} 

Our key insight is that we can use accepting paths and cycles in $\buchi$ to shape $\rltl$. Instead of only providing reward when an accepting state in $\buchi$ is visited (as per previous approaches e.g.~\citet{voloshin2023eventual}), we reward progress within an accepting path or cycle. 
In our example from fig~\ref{fig:flatworld_example}, if we reward each transition in the initial path $\{1, 2, 3, 0\}$ and the cycle with the same states, the agent would receive rewards for visiting the red region, then yellow, then green, then for returning to red, and so on. 

Multiple accepting paths and cycles may exist in $\buchi$. The path and cycle that is used to shape $\rltl$ cannot be picked arbitrarily, since they may be infeasible under the dynamics of the MDP. For example, the cycle $\{1, 2, 0\}$ in Figure~\ref{fig:flatworld_example} cannot effectively shape $\rltl$ because it is impossible to be both in the yellow and green regions at the same time. 

\subsection{Reward Shaping with CyclER}
\label{subsec:cycler_algorithm}
CyclER is a reward function that automatically selects paths and cycles to shape $\rltl$ based on collected experience.



\begin{definition} [Minimal AIP (MAIP)]\label{def:maips}
A minimal accepting initial path for accepting state $b^*_k$ is an AIP that does not contain a subcycle for any node $b_i$ in the path where $b_i \neq b^*_k$.
\end{definition}

\begin{definition} [Minimal AC (MAC)]\label{def:macs}
A minimal accepting cycle $c$ for accepting states $b^*_k$ and $b^*_l$ is an AC that does not contain a subcycle for any node $b_i$ in the cycle  where $b_i \notin \{b^*_k, b^*_l\}$.
\end{definition}

We provide CyclER with all MAIPs and MACs in an LDBA using Depth-First Search with backtracking (see Appendix Algs.~\ref{alg:find_macs} and~\ref{alg:dfs_helper}). Let $\maips$ and $\cycles$ be the set of MAIPs and MACs, respectively. 

We also maintain a frontier $e$ of visited transitions in $\buchi$ at each timestep in a trajectory that ensures reward will only be given once per transition until an accepting state is visited.
In particular, we set $e[(b_i, \nu, b_i)] = 1$ when a transition $(b_i, \nu, b_i)$ is taken and reset all $e \equiv 0$ when $b_j \in \buchiaccepts$. 
Policies that use CyclER observe $s$, $b$, and $e$ as their current state.

Now we describe the CyclER reward computation. We first collect a full trajectory $\tau$ induced by $\mdp^{\taskspec}_{\pi}$ for a given policy $\pi$. Then, at each timestep $t$ from $0$ to $|\tau| - 1$, we compute $r_{\cycles}$ for every path in $\maips$ if we have not yet visited an accepting state, or every cycle in $\cycles$ if we have. We will abuse notation slightly and use $c$ to refer to elements in either $\maips$ or $\cycles$:

\vspace{-1em}
\begin{equation} \label{eq:cycler_reward}
    r_{\cycles}(s, b, s', b', e, c) = 
    \begin{cases}
        \frac{1}{|c|} & \text{if }  (b, \labelingfunction(s'),  b') \in c \hspace{0.2em} 
        \text{ and } e[b, \labelingfunction(s'), b'] = 0\\
        0 & \text{otherwise }
    \end{cases}
\end{equation}

This function rewards every transition taken in a given $c$. In other words, when an agent ``gets closer'' to an accepting state by progressing along a path or cycle, we reward that progress. Importantly, we do not reward transitions taken in a given $c$ more than once per visit to an accepting state. In other words, a trajectory that repeatedly takes transitions in $c$ but never reaches an accepting state $b^*$ will only receive reward for the first instance of each transition taken in $c$.  
To account for $c$ of varying length, rewards are normalized by the length $|c|$. 

\begin{wrapfigure}{L}{0.5\textwidth}
\begin{minipage}{0.5\textwidth}
\begin{small}
\begin{algorithm}[H]
\SetAlgoLined
\KwIn{Trajectory $\tau$, $\buchiaccepts$, cycles $\cycles$, paths $\maips$, Labelling function $L^\mdp$}
     Initialize matrix $R_\cycles$ size $\max(|\cycles|, |\maips|) \times (|\tau| - 1)$\;
     Initialize $\rcycler$ to an array of size $(|\tau| -1)$\;
     Initialize $j = 0$\;
    \ForEach{$t= 0, \ldots, |\tau|-1$}{
        \eIf{$j = 0$}{
            \ForEach{Path $p_i \in \maips$}{
            
                $R_{\cycles}[i, t] = r_{\cycles}(b_t, s_{t+1}, b_{t+1}, e_t, p_i)$
            }
        }{
            \ForEach{Cycle $c_i \in \cycles$}{
            
                $R_{\cycles}[i, t] = r_{\cycles}(b_t, s_{t+1}, b_{t+1}, e_t, c_i)$
            }
        
        }
        Set $e[b_t, L^\mdp(s_{t+1}), b_{t+1}] = 1$\;
        \If{$b_{t+1} \in \buchiaccepts$ or $t + 1 = |\tau|$}{
        
            Select $i = \text{argmax}_{i \in |C|}(\sum_{j' = j}^{t} R_{\cycles}[i, j'])$\;
            \ForEach{$t'$ from $j$ to $t+1$}{
            $\rcycler[t'] = R_{\cycles}[i, t']$
            }
            $j = t + 1$\;
            Set all $e \equiv 0$\;
        }
    }
\Return{$\rcycler$}\;
 \caption{Cycle Experience Replay (\textbf{CyclER})}
      \label{alg:cycler_alg}
\end{algorithm}
\end{small}
\end{minipage}
\end{wrapfigure}

 If we visit an accepting state $b^*$ or reach the end of a trajectory, we retroactively assign rewards to the timesteps that preceded this point, up to the most recent accepting state visit (if one exists). Assigned rewards correspond to the cycle with the highest total reward for that partial trajectory. 
 Put simply, CyclER picks the `best' cycle for a partial trajectory and uses it to shape reward. Even if a trajectory does not manage to visit an accepting state, CyclER will still provide reward if it was able to take \textit{any} transition along \textit{any} MAIP. The CyclER algorithm is formally outlined in Algorithm~\ref{alg:cycler_alg}.


Let us walk through an example of using CyclER to shape reward by referring to the specification $\varphi$ and trajectory in the FlatWorld MDP from Figure~\ref{fig:flatworld_example}. Here, the agent produces a trajectory of length 4, indicated by the end of each line segment, starting from its initial state. The corresponding LDBA path for this trajectory $\xi$ is $(1, 2, 2, 3)$. Under the unshaped (non-CyclER) proxy reward defined in~\ref{eq:eventualdiscounting}, no reward would be gained from this trajectory, as the accepting state of the LDBA is not visited. Instead, let us consider how CyclER shapes the reward for each transition in this trajectory. There are three MAIPs in this LDBA: $\{(1, ryg, 0)\}$, $\{(1, r, 2), (2, yg, 0)\}$, and $\{(1, r, 2), (2, g, 3), (3, y, 0)\}$. For each of these MAIPs, CyclER will retroactively assign rewards to each transition within the trajectory. Under the MAIP $\{(1, ryg, 0)\}$, no transitions will receive reward, as the predicate $ryg$ is not achieved. For $\{(1, r, 2), (2, yg, 0)\}$, the transition to the second timestep of the trajectory will receive a reward of $\frac{1}{2}$ for visiting the red region. Lastly, for the MAIP $\{(1, r, 2), (2, g, 3), (3, y, 0)\}$, both the transitions to the second and fourth timesteps will receive a reward of $\frac{1}{3}$, for visiting the red and green regions, respectively. Therefore, CyclER will choose to shape the trajectory's rewards using the MAIP $\{(1, r, 2), (2, g, 3), (3, y, 0)\}$, as it results in the highest cumulative reward assigned to the entire trajectory.

We denote the rewards returned from Algorithm~\ref{alg:cycler_alg} as $\rcycler$. $\rcycler$ can be used in place of the unshaped $\rltl$ in function~\ref{eq:eventualdiscounting} to provide a more dense reward for $\tau$.


\begin{theorem}[Informal] \label{cycleroptimality}
By replacing $\rltl$ with $\rcycler$ in \ref{eq:eventualdiscounting}, the solution to problem \ref{eq:true_objective} remains (approximately) probability optimal in satisfying the LTL formula $\varphi$. See Appendix Lemma \ref{lem:cyclerisLTLoptimal} for the proof.
\end{theorem}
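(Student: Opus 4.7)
The plan is to reduce Theorem~\ref{cycleroptimality} to the eventual-discounting optimality result of \citet{voloshin2023eventual} (Theorem 4.2) by showing that the CyclER-shaped value function $V_\pi^{\text{CyclER}}$ (obtained by substituting $\rcycler$ for $\rltl$ in equation~\ref{eq:eventualdiscounting}) differs from the unshaped eventual-discounting value $V_\pi$ by only a bounded additive amount, uniformly over $\pi \in \Pi$. Once this uniform gap is established, approximate optimality transfers from $V$ to $V^{\text{CyclER}}$ via a standard two-step argument, and combining the conclusion with Theorem~\ref{lambdalemma} applied to the shaped objective yields the desired approximate probability optimality.

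The first step is a per-epoch accounting of $\rcycler$. I would partition an infinite trajectory $\tau$ into epochs demarcated by the visit times $T_1 < T_2 < \dots$ to $\buchiaccepts$ (with $T_j = \infty$ if the $j$-th visit never occurs). By Definitions~\ref{def:maips}--\ref{def:macs} and the design of Algorithm~\ref{alg:cycler_alg}, within any single epoch CyclER commits to one MAIP or MAC $c$ and awards $1/|c|$ for at most each of its $|c|$ transitions, since the frontier $e$ is reset only upon entering $\buchiaccepts$. Thus the total undiscounted CyclER reward collected during epoch $j$ lies in $[0,1]$, and equals $1$ exactly when every transition of the chosen cycle is traversed. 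In particular, any policy whose trajectories completely traverse the chosen cycle in every epoch receives the same per-epoch reward mass as under $\rltl$, merely redistributed in time within the epoch.

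The second step uses this accounting to bound $|V_\pi^{\text{CyclER}} - V_\pi|$ uniformly in $\pi$. Since the eventual-discounting factor $\Gamma_t = \gamma_\taskspec^{j_t}$ is piecewise constant between accepting visits, each epoch $j$ contributes at most $\gamma_\taskspec^j$ to either value function, and the two per-epoch contributions agree up to a correction bounded above by $\gamma_\taskspec^j$. Summing a geometric series yields a uniform bound
\[ \sup_{\pi \in \Pi} \bigl|V_\pi^{\text{CyclER}} - V_\pi\bigr| \;\leq\; C := \tfrac{1}{1-\gamma_\taskspec}. \]
A standard transfer then shows that any $\pi^\star \in \argmax_{\pi \in \Pi} V_\pi^{\text{CyclER}}$ is a $2C$-approximate maximizer of $V_\pi$, and invoking Theorem 4.2 of \citet{voloshin2023eventual} translates this suboptimality in $V$ into approximate optimality of $\mathbb{P}[\pi \models \varphi]$, provided $\gamma_\taskspec$ is chosen sufficiently close to $1$. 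Substituting $V^{\text{CyclER}}$ for $V$ in objective~\ref{eq:true_objective} and reapplying Theorem~\ref{lambdalemma} under the analogous gap assumption then completes the argument.

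The main obstacle will be refining the per-epoch correction so that the resulting constant $C$ is small enough that $\lambda^\ast$ from Theorem~\ref{lambdalemma} remains finite, and verifying that Assumption~\ref{gap_assumption} extends from $V_\pi$ to $V_\pi^{\text{CyclER}}$ (i.e., that CyclER-optimal policies remain separated from suboptimal ones by a positive gap). A secondary technical point is justifying that whenever $\varphi$ admits an accepting policy, the set $\maips \cup \cycles$ is nonempty and contains at least one cycle traversable under the MDP dynamics, so that the supremum of $V^{\text{CyclER}}$ is attained on trajectories that genuinely satisfy $\varphi$ rather than merely accumulating partial-progress reward in the initial epoch.
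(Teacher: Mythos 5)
There is a genuine gap, and it is the one you yourself flag at the end: the uniform bound $C = \tfrac{1}{1-\gamma_\taskspec}$ is of the same order as the value functions themselves (both $V_\pi$ and $V^{\text{CyclER}}_\pi$ already lie in $[0, \tfrac{1}{1-\gamma_\taskspec}]$), so the conclusion that a $V^{\text{CyclER}}$-maximizer is a $2C$-approximate maximizer of $V_\pi$ is satisfied by \emph{every} policy and transfers nothing. This is not a constant to be "refined later"; it is the crux of the theorem. Your per-epoch accounting in step one actually contains the ingredient you need but your step two discards it: for every \emph{complete} epoch (between consecutive visits to $\buchiaccepts$) the LDBA path traversed contains all transitions of some MAC, each rewarded exactly once at $\tfrac{1}{|c|}$, so the CyclER mass of a complete epoch is exactly $1$ --- identical to the $\rltl$ mass, not merely "within $\gamma_\taskspec^j$ of it." The entire discrepancy therefore lives in the single incomplete terminal epoch (partial progress along a MAIP or MAC that is never completed), which contributes at most an absolute constant independent of $\gamma_\taskspec$; only with that sharper bound does the normalized discrepancy $(1-\gamma_\taskspec)\,|V^{\text{CyclER}}_\pi - V_\pi|$ vanish as $\gamma_\taskspec \to 1$ and permit a transfer of approximate optimality.

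The paper's proof (Lemma~\ref{lem:cyclerisLTLoptimal} via Lemma~\ref{lem:optimizingcycler}) takes a related but more direct route: rather than comparing $V^{\text{cyc}}_\pi$ to $V_\pi$, it mirrors Lemma 4.1 of \citet{voloshin2023eventual} for the shaped value directly, decomposing the trajectory at the first and last accepting visits and showing $0 \leq (1-\gamma)V^{\text{cyc}}_\pi - \mathbb{P}[\pi \models \taskspec] \leq 1 - \gamma^{M+1}$ for all $\pi$, after which optimality transfers exactly as in their Theorem 4.2. Crucially, this requires an assumption your proposal omits entirely: Assumption~\ref{assump:finite}, a uniform bound $M$ on the last time a non-accepting trajectory visits $\buchiaccepts$. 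Without it, non-satisfying trajectories can visit accepting states unboundedly often (while still failing to do so infinitely often on a measure-one set in the limit is fine, but finitely-truncated counts are not controlled), and neither the direct bound nor your difference-based reduction closes. Your secondary concerns (nonemptiness of $\maips \cup \cycles$, extension of Assumption~\ref{gap_assumption} to the shaped value) are reasonable but peripheral; the missing pieces are the constant-order (not $\tfrac{1}{1-\gamma_\taskspec}$-order) control of the shaped reward surplus and Assumption~\ref{assump:finite}.
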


\subsection{CyclER with Quantitative Semantics}
\label{subsec:cycler_qs}
A number of recent works have explored the usage of \textit{Quantitative Semantics} (QS) to help shape rewards for temporal logic tasks~\citep{li2017reinforcement, balakrishna2019bhnr, jothimurugan2021dirl, kagarla2021mdpstlfragment, ikemoto2022lagrangianstl}. QS defines a set of rules for temporal logic which extend Boolean logic to operations over real values. Using QS, we can take real-valued signals from our atomic propositions $x \in \AP$, and compose them with the QS version of our logical connectives (\&, $\neg$ and $\rightarrow$) and temporal operators ($(X)$, $(G)$, $(F)$, and $(U)$) to compute a real-valued signal for how close a trajectory comes to satisfying a specification. We will refer to this computation as the \textit{quantitative evaluation} of a trajectory with respect to an LTL task. The exact quantitative semantics of the aforementioned LTL operations are provided in Figure~\ref{fig:qs_rules} of the Appendix. 

Unfortunately, there are several shortcomings of ``off-the-shelf'' usage of QS as a reward function. Quantitative evaluation produces a single value for an entire trajectory, which makes credit assignment for individual transitions difficult.  More pressingly, quantitative evaluation of a finite trace frequently produces values that do not correlate with visits to accepting states in $\buchi$, especially for LTL formulae with indefinite horizons or arbitrarily ordered sub-goals. 
Existing approaches have circumvented these issues by using QS for explicitly time-bounded temporal logics with simple tasks~\citep{kagarla2021mdpstlfragment, balakrishna2019bhnr}, or by considering fragments of LTL that can be reasoned about as finite sequences of ordered sub-tasks~\citep{jothimurugan2021dirl}.
In what follows, we show that CyclER easily incorporates QS for more effective LTL reward shaping by considering each transition in $\buchi$ as an independently evaluable sub-task.

We first define some notation. 
In order to use QS, we assign \textit{robustness measures} (real-valued signals) $f_x: \states \rightarrow \R$ to each atomic proposition $x \in \AP$, where $x$ is true when $f_x(s) \geq c_x$ (a constant threshold). 
We will follow standard practice and notate the quantitative evaluation of an LTL formula $\taskspec$ over a trajectory $\tau$ as $\rho_\taskspec(\tau)$, where $\taskspec$ is true when $\rho_\taskspec(\tau) > 0$. We also define a maximum and minimum for $\rho$ as $\rho_{\max}$ and $\rho_{\min}$, respectively.


Now we explain how to incorporate QS into CyclER. At a given state $b$ in $\buchi$, we can think of our sub-task as taking the next transition in the accepting path or cycle currently under consideration by CyclER. We need to only consider one transition at a time because CyclER reasons about each accepting path and cycle independently. Our approach to incorporating QS builds on this idea by rewarding quantitative \textit{progress} towards taking the next transition in a path or cycle. Importantly, each transition $\nu$ in $\buchi$ is associated with an atomic predicate, which can be quantitatively evaluated at individual states rather than entire trajectories (i.e., $\rho_\nu(s): \states \rightarrow \R$). If we move from state $s$ to $s'$ in $\mdp$, we can evaluate how much closer we are to satisfying the predicate of our next transition $\nu$ by taking the difference in quantitative evaluation between successive states: $\rho_{\nu}(s') - \rho_{\nu}(s)$. This measure of progress is used to shape reward.


Specifically, our approach to incorporating QS uses the following reward function for a given cycle or initial path $c$. We use $c[b]$ to refer to the transition predicate in $c$ with parent node $b$:
\begin{equation} \label{eq:qs_reward}
    r_{\text{qs}}(s, b, s', b', e, c) =
    \begin{cases}
        \frac{\rho_{c[b]}(s') - \rho_{c[b]}(s)}{(\rho_{\max} - \rho_{\min}) * |c|} &\hspace{-0.5em} \text{if }  (b, \labelingfunction(s'),  b') \in c  
        \text{ and } e[b, \labelingfunction(s'),  b'] = 0\\
        0 & \hspace{-0.5em} \text{otherwise }
    \end{cases}
\end{equation}

We use $\rhomax$ and $\rhomin$ to normalize the quantitative progress made towards taking a transition\footnote{We note that the reward in fn.~\ref{eq:qs_reward} is most well-shaped when the robustness measures for all $x \in \AP$ are of similar scale, but we make no such assumptions for the sake of generality.}. Reward function~\ref{eq:qs_reward} acts as a direct substitute to function~\ref{eq:cycler_reward} in Alg. 1 to compute $\rcycler$. Unlike function~\ref{eq:cycler_reward}, function~\ref{eq:qs_reward} allows for nonzero rewards to be given more than once per transition in an LDBA, allowing for dense reward even in LDBAs with few transitions.  

In our experiments (section~\ref{sec:experiments}), we show that incorporating quantitative semantics into CyclER for shaping $\rltl$ leads to improvement in empirical performance when compared to existing methods of using QS. 
We provide the full QS for LTL, along with additional explanation and examples for CyclER+QS in Appendix~\ref{sec:appendix_qs}.

\section{Experiments}
\label{sec:experiments}

We demonstrate experimental results in several domains with continuous state and action spaces on LTL tasks of varying complexity. We seek to answer the following questions: \textbf{(1)} Does CyclER learn satisfactory policies and avoid ignoring $\rltl$ in favor of $\rmdp$? \textbf{(2)} Does a policy that optimizes the dual reward formulation in $\rdual$ gain higher $\rmdp$ than a policy that only seeks to satisfy the LTL constraint? \textbf{(3)} How does the value of $\lambda$ in $\rdual$ affect the performance of the learned policy?

\subsection{Experimental Domains and Tasks}

In our experiments, we evaluate the efficacy of CyclER on indefinite-horizon ($\omega$-regular) tasks expressible by LTL. We use environments where $\rmdp$ does not explicitly correlate with $\rltl$ in order to effectively distinguish between policies that learn to only optimize $\rmdp$ and policies that learn to satisfy the LTL specification.

\textbf{FlatWorld} The FlatWorld domain (\ref{fig:flatworld_example}) is a two dimensional world with continuous state and action spaces. The agent (denoted by a green dot) starts at (-1, -1). The agent's state, denoted by x, is updated by an action $a$ via $x' = x + a / 10$ where  $x \in \mathbb{R}^2$ 
and $a \in [0, 1]^2$. 
There exists a set of randomly generated purple `bonus regions', which offer a small reward when visited. We use the specification from Figure~\ref{fig:flatworld_example} as our LTL task. 

\textbf{ZonesEnv} We use the Zones environment from the MuJoCo-based Safety-Gymnasium suite of environments~\citep{ji2023safety}. In this domain, a robot must navigate the environment, which includes four differently colored goal regions and `hazard' areas that offer a small negative reward. The robot receives an observation of lidar data that detects the presence of nearby objects at each timestep. The LTL task description instructs the agent to oscillate amongst visiting the four colored regions. 

\textbf{ButtonsEnv} We use the Buttons environment, also from Safety-Gymnasium. This domain is a more challenging version of the Zones environment, where an agent must press a number of small buttons in a larger space while avoiding cube-shaped `gremlins' that move in a fixed circular path.
The LTL task description instructs the agent to press two specific buttons infinitely often, while avoiding making contact with gremlins. Unlike the ZonesEnv, `bonus' regions are scattered around the environment, offering a small reward if visited.

\subsection{Implementation Details and Baselines}

We use entropy-regularized PPO~\citep{schulman2017ppo} with a Gaussian policy over the action space as our policy class. 

Although we are not aware of an existing approach that considers reward optimization under general LTL constraints for deep RL, we compare against a number of existing reward methods for temporal logic-guided RL as $\rltl$ in our $\rdual$ formulation. We use a baseline policy trained using the LCER method~\citep{voloshin2023eventual}, a state-of-the-art approach to RL for general LTL that uses an unshaped reward with counterfactual experience replay to improve the sample efficiency of learning. Additionally, we compare against the LCER baseline, but trained \textit{only} on the unshaped LTL reward function $\rltl$, in order to observe the performance of a policy that does not get `distracted' during training by $\rmdp$. 

In the ZonesEnv and ButtonsEnv domains, where the dynamics are more complex, we define simple robustness measures for each atomic proposition and \textbf{use the QS version of CyclER defined in section~\ref{subsec:cycler_qs}}.
In these environments, we compare against two additional baselines that also use QS for reward shaping and are computable for infinite-horizon LTL tasks: a TLTL-based reward~\citep{li2017reinforcement} and BHNR~\citep{balakrishna2019bhnr}.

For each baseline, $\lambda$ was chosen to be that which led to best performance (on unshaped $\rltl$, using $\rmdp$ as a tie-breaker) from a hyperparameter sweep. The robustness measures used in these domains along with all hyperparameters used during training are available in Appendix~\ref{sec:appendix_experiment_details}.

\subsection{Results}

\begin{table}
\centering
\small
\setlength{\tabcolsep}{3pt}
\begin{tabular}{ |l|cc|cc|cc| }
\cline{2-7}
\multicolumn{1}{c|}{} & \multicolumn{2}{c|}{\textbf{FlatWorld}} & \multicolumn{2}{c|}{\textbf{ZonesEnv}} & \multicolumn{2}{c|}{\textbf{ButtonsEnv}}\\
\multicolumn{1}{c|}{} & $\buchiaccepts$ visits & $\rmdp$ & $\buchiaccepts$ visits & $\rmdp$ & $\buchiaccepts$ visits & $\rmdp$\\
\cline{1-7}
CyclER & $\mathbf{2.0 \pm 0.5}$ & $45.3 \pm 8.5$ & $\mathbf{1.8 \pm 0.4}$ & $-27.8 \pm 4.55$ & $\mathbf{2.6 \pm 0.3}$ & $30.4 \pm 5.6$ \\
LCER & $0.0 \pm 0.0$ & $103.4 \pm 76.6$ & $0.0 \pm 0.0$ & $-3.8 \pm 1.9$ & $0.0 \pm 0.0$ & $118.8 \pm 143.2$ \\
LCER, no $\rmdp$ & $0.8 \pm 0.4$ & $30.8 \pm 10.1$ & $0.0 \pm 0.0$ & $-2.7 \pm 0.9$ & $0.6 \pm 0.4$ & $13.2 \pm 8.53$ \\
TLTL & - & - & $0.0 \pm 0.0$ & $-4.0 \pm 2.2$ & $0.0 \pm 0.0$ & $9.6 \pm 6.7$ \\
BHNR & - & - & $0.0 \pm 0.0$ & $-0.8 \pm 0.6$ & $0.0 \pm 0.0$ & $35.8 \pm 7.9$ \\
\cline{1-7}
\end{tabular}
\vspace{5pt}
\caption{Reward average and standard deviation achieved on each domain with an extended horizon. $\buchiaccepts$ visits identifies the average number of visits to an accepting state in $\buchi$ achieved for a trajectory from $\pi$, and $\rmdp$ refers to the average MDP reward collected during a trajectory.}
\label{tab:results_table}
\end{table}

\begin{figure*}[t]
\centering
\begin{minipage}{\linewidth}
    \begin{minipage}{0.33\linewidth}
        \centering
        \par \textbf{FlatWorld}
        \centering
        \par
        \includegraphics[width=\textwidth]{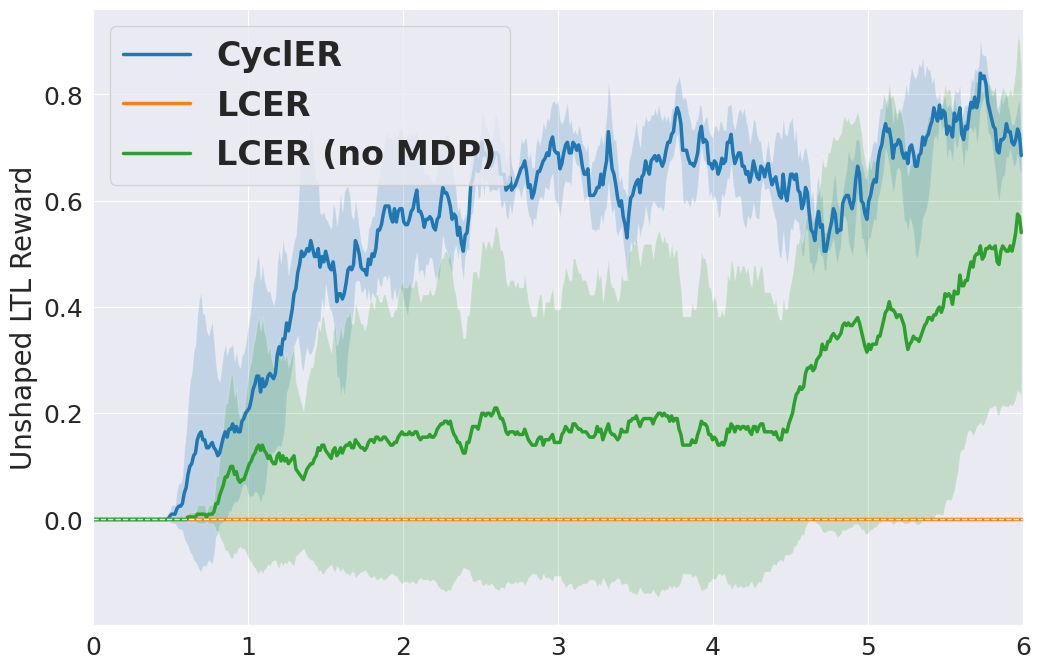}
    \end{minipage}
    \begin{minipage}{0.33\linewidth}
        \centering
        
        \par \textbf{ZonesEnv}
        \centering
        \par
        \includegraphics[width=\textwidth]{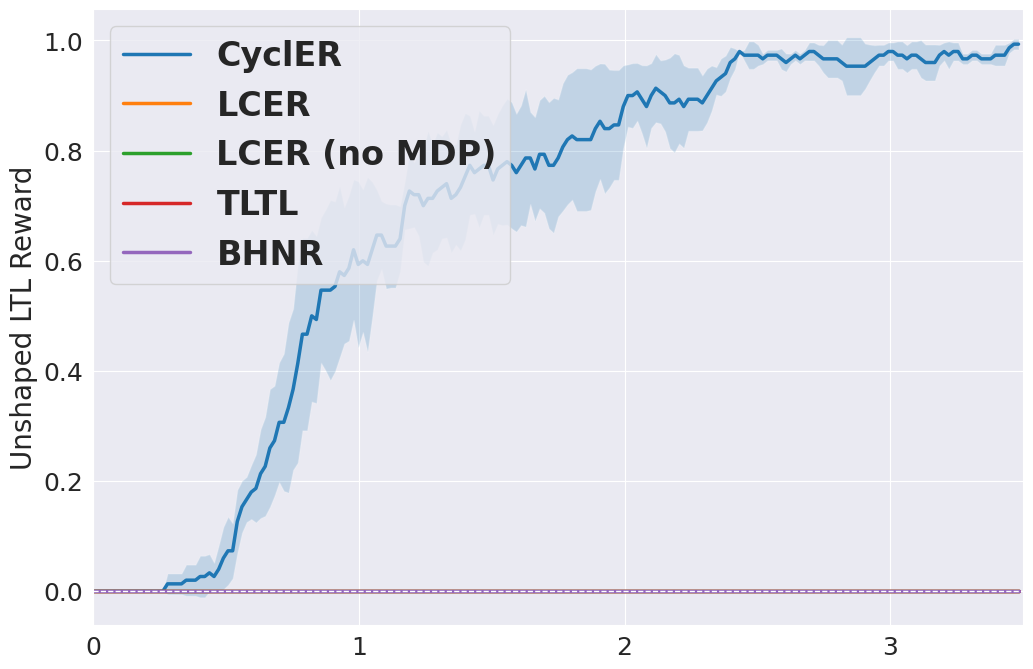}
    \end{minipage}
    \begin{minipage}{0.33\linewidth}
        \centering
        \par \textbf{ButtonsEnv}
        \centering
        \par
        \includegraphics[width=\textwidth]{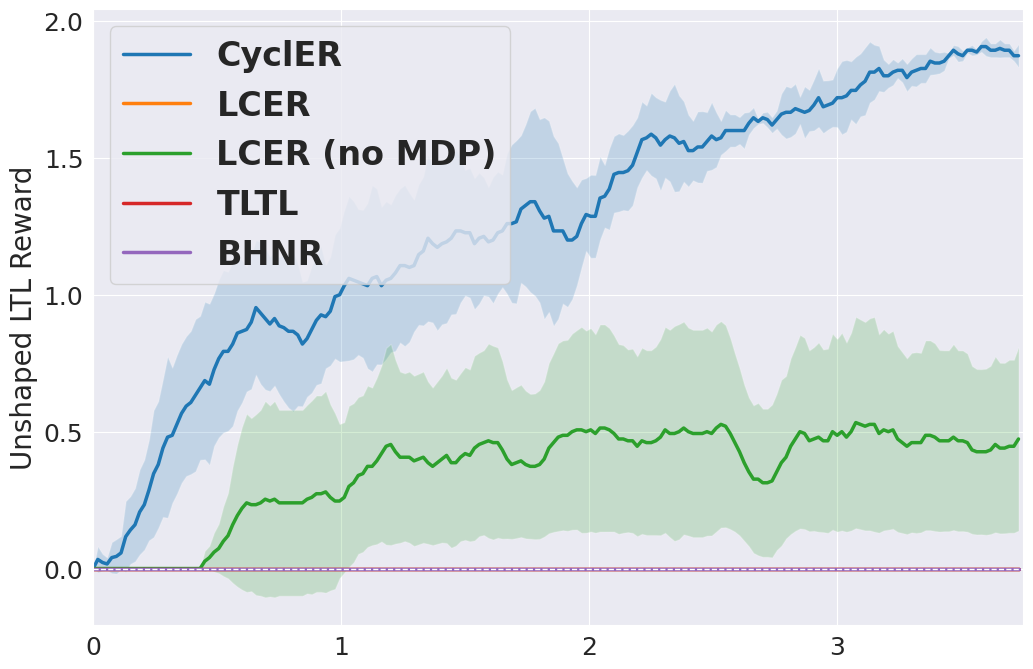}
    \end{minipage}
\end{minipage}
\begin{minipage}{\linewidth}
    \begin{minipage}{0.33\linewidth}
        \centering
        \par
        \includegraphics[width=\textwidth]{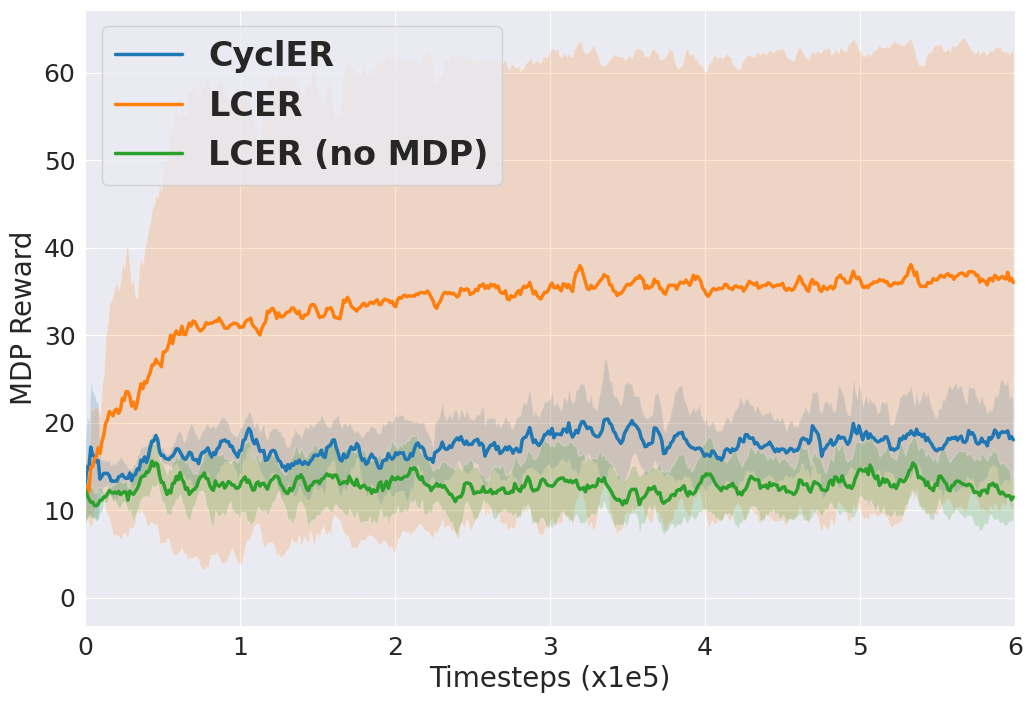}
    \end{minipage}
    \begin{minipage}{0.33\linewidth}
        \centering
        \par
        \includegraphics[width=\textwidth]{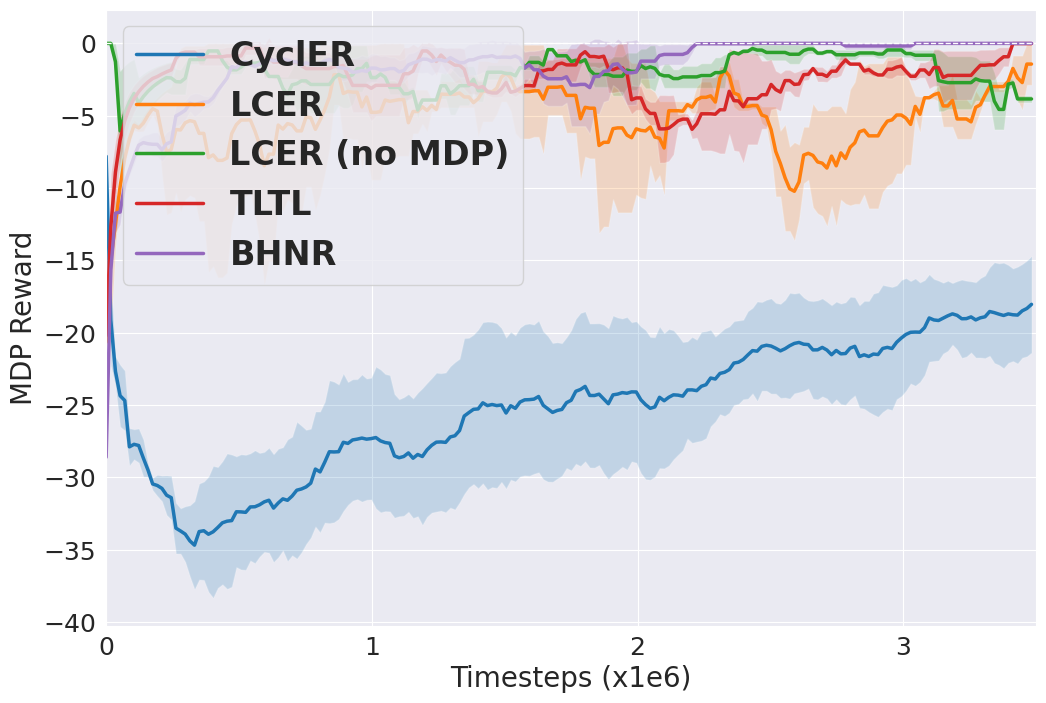}
    \end{minipage}
    \begin{minipage}{0.33\linewidth}
        \centering
        \par
        \includegraphics[width=\textwidth]{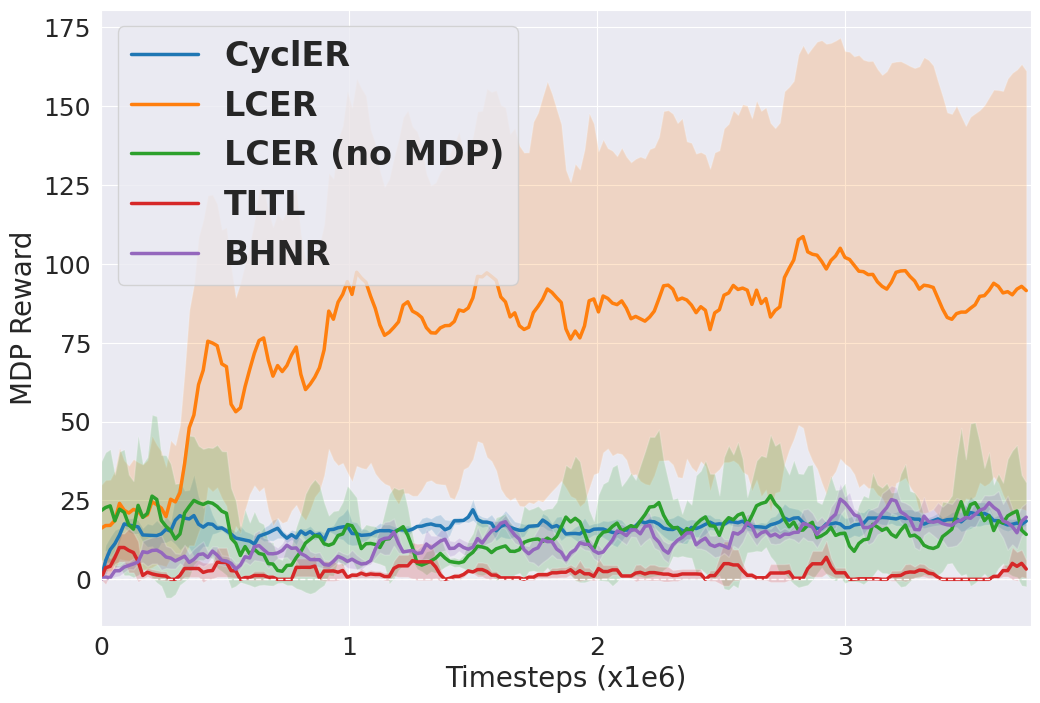}
    \end{minipage}
\end{minipage}

    \vskip -0.5em
    \centering
    \caption{Training curves showing unshaped $\rltl$ (top) and $\rmdp$ (bottom) performance averaged over 5 random seeds. Each point is the mean of 10 stochastic policy rollouts.}
\label{fig:training_plots}
\end{figure*}

\textbf{(1) Does CyclER learn satisfying policies and prevent ignoring $\rltl$?} Yes - our results demonstrate that CyclER achieves significant improvement in performance in satisfying the LTL task when compared to our baseline methods. In Figure~\ref{fig:training_plots}, we plot the learning curves for both the \textit{unshaped} $\rltl$ and $\rmdp$. We record the (stochastic) performance of the best policies found during training on an extended horizon to enable repeated visits to the accepting state, and present the results (averaged over 50 rollouts) in Table~\ref{tab:results_table}. 

We find that unshaped rewards (LCER) quickly ignore $\rltl$ in most trials. From Table~\ref{tab:results_table}, we see that the LCER baseline even without $\rmdp$ was not able to accomplish the LTL task as consistently as CyclER, even when successful in visiting an accepting state. This implies that reward shaping is critical for LTL-guided RL even in settings where no $\rmdp$ is present. In ZonesEnv and ButtonsEnv, the TLTL and BHNR baselines, which are not suited to infinite-horizon tasks with multiple unordered subgoals, learned behavior that optimized their respective QS-shaped LTL rewards but did not correlate with task satisfaction (zero $\rltl$ achieved in Figure~\ref{fig:training_plots}). CyclER with QS, on the other hand, quickly learned to achieve the tasks in these two domains. Most meaningfully, CyclER is able to repeatedly visit the accepting state in all domains (as evidenced by Table~\ref{tab:results_table}), demonstrating that our technique enables consistent-behaving policies that can indefinitely traverse accepting cycles. We additionally provide a qualitative analysis of learned behavior in the FlatWorld domain in Appendix~\ref{sec:appendix_addl_experiment_results}.

\textbf{(2) Does optimizing $\rdual$ improve $\rmdp$?} To evaluate this question, we conducted an ablation study where we trained a CyclER-based policy in the FlatWorld domain, making it completely unaware of $\rmdp$, and then evaluated its performance to observe if the $\rdual$ formulation led to a nontrivial difference in behavior between policies. We found that the $\rdual$-optimizing CyclER policy visited $\buchiaccepts$ an average of 2.0 times (std. dev. 0.3) and achieved an MDP reward of 45.3 (std. dev. 8.5), and the $\rltl$-only policy visited $\buchiaccepts$ an average of of 2.2 times (std. dev. 0.4) and achieved an MDP reward of 27.4 (std. dev. 0.7). Optimizing $\rdual$ does lead to an improvement in $\rmdp$, albeit at the potential cost of LTL satisfaction. We provide a qualitative comparison of the two learned policies in Appendix ~\ref{sec:appendix_addl_experiment_results}.

\begin{wraptable}{R}{0.3\textwidth}
    \small
    \setlength{\tabcolsep}{3pt}
    \begin{tabularx}{\linewidth}{ |l|cc| }
        \cline{2-3}
        \multicolumn{1}{c|}{} 
        & \multicolumn{2}{c|}{\textbf{FlatWorld}}
        \\
        \multicolumn{1}{c|}{} & $\buchiaccepts$ visits & $\rmdp$\\
        \cline{1-3}
        $\lambda = 100$   & $0.0 \pm 0.0$ & $62.7 \pm 4.36$ \\
        $\lambda = 200$    & $0.0 \pm 2.0$ & $69.6 \pm 4.1$ \\
        $\lambda = 300$   & $2.0 \pm 0.4$ & $37.2 \pm 9.0$  \\
        $\lambda = 400$ & $2.1 \pm 0.4$ & $34.3 \pm 7.3$ \\
        \cline{1-3}
    \end{tabularx}
    \caption{Performance results for CyclER with differing $\lambda$.}
    \label{tab:lambda_vary_table}
\end{wraptable}

\textbf{(3) How does varying $\lambda$ affect the resulting policy?} In Table~\ref{tab:lambda_vary_table}, we report results from a study where we vary the value of $\lambda$ in $\rdual$ for the FlatWorld domain experiment. We observe, as expected, a tradeoff in the performance of LTL satisfaction and $\rmdp$ as $\lambda$ increases. However, we notice that the tradeoff diminishes once a value for $\lambda$ is reached that enables LTL-satisfying behavior. This supports our intuition that $\lambda$ can effectively be used as a hyperparameter to trade off the empirical performance of LTL satisfaction and the MDP reward achieved by a policy.

\section{Related Work}
Our work falls under the broad umbrella of {\it specification-guided} approaches to {\it correct-by-construction} machine learning~\citep{seshia-cacm22a}, with a particular focus on RL.

\textbf{Temporal Logic-Constrained Policy Optimization.} 
Previous work has explored cost-optimal control under linear temporal logic constraints with known dynamics~\citep{Ding2014, cai2021optimal}. More recently, interest has emerged in RL-based approaches to logic-constrained policy optimization. ~\cite{VoloshinLCP2022} provides an exact solution method for policy optimization under general LTL constraints in discrete settings where the dynamics are unknown by assuming a lower bound on transition probabilities in $\mdp$. Other works focus on Signal Temporal Logic (STL) and are either designed for discrete spaces~\citep{kagarla2021mdpstlfragment} or lack guarantees~\citep{ikemoto2022lagrangianstl}. Our CyclER-QS method can be viewed as an approach designed for a subset of STL, with improved performance by our leveraging of the LDBA during policy learning.

\textbf{RL with Temporal Logic Objectives.} In contrast to settings with both temporal logic constraints and reward functions, a significant amount of work has been devoted to developing RL approaches with temporal logic specification(s) as the lone objective. Early efforts focused primarily on using Q-learning-style methods over augmentations of $\mdp$~\citep{sadigh2014learning, aksaray16qlearnqs, venkataraman2020tractablestl, cai2021optimal}. Subsequent works~\citep{hasanbeig2020deep, Icarte2020RewardMachine, Camacho2019LTLAndBeyond, jothimurugan2019composable} extend temporal logic-guided RL to deep RL settings. In developing the theoretical limitations of temporal-logic guided RL,~\citep{Yang2021Intractable, alur22ltlframework} show that guarantees on RL for LTL cannot in general be made. To obtain (approximate) guarantees on learning, existing works have made assumptions on the environment dynamics~\citep{FuLTLPAC, VoloshinLCP2022, Wolff2012RobustControl} or finitized the policy's horizon through discounting or recurrence time~\citep{alur2023discounting, perez2023recurrence}. In continuous spaces, prior works provide guarantees that the optimal policy under a proxy objective will satisfy the original logical specification of interest~\citep{voloshin2023eventual, hasanbeig2020deep, jothimurugan2021dirl, Camacho2019LTLAndBeyond}, similar to the guarantees made in our work. 

To handle longer-horizon specifications, previous endeavors proposed compositional RL approaches that leverage the DAG-like structure for finitary fragments of temporal logic~\citep{jothimurugan2021dirl, bonassi2023purepastltl}. Other works take a multi-task RL approach that learns subtasks, which allows for the completion of extended-horizon tasks and unseen tasks over the same $\AP$ set~\citep{vaezipoor2021ltl2action, qiu2023gcrlltl, leon2022nutshell, liu2022skilltransfer}. 
An extensive body of work uses quantitative signals to provide denser feedback for temporal logic objectives that aids policy learning in longer-horizon task settings~\cite{li2017reinforcement, balakrishna2019bhnr, ikemoto2022lagrangianstl, Krasowski2023stlcontinuous}. These quantative signals, which are often captured in Signal Temporal Logic, are also used by our QS approach introduced in section~\ref{subsec:cycler_qs}. However, the usage of these quantitative signals leads to a number of pitfalls in deep RL that CyclER is able to avoid, as discussed in section~\ref{subsec:cycler_qs} and Appendix~\ref{sec:appendix_qs}.
More recent work considers problem settings where there is uncertainty in an agent's knowledge of atomic propositions and proposes a belief-based approach to policy learning in this setting~\citep{Li2024NoisyRM}.
Our approach is able to handle indefinite-horizon specifications for single tasks and we see the integration of our reward shaping into both multi-task frameworks and noisy environments as exciting directions for future work. 

\textbf{Constrained Policy Optimization.} The broader constrained policy optimization works mostly relate to the constrained Markov Decision Process (CMDP) framework~\citep{le2019batch, achiam2017constrained, altman2021constrained}, which enforce penalties over expected constraint violations rather than absolute constraint violations. In contrast, our work aims to satisfy absolute constraints in the form of LTL.

\section{Conclusion}
This paper proposes a novel approach to finding policies that are both reward-maximal and probability-optimal with respect to an LTL constraint. Specifically, we introduce CyclER, an experience replay technique that automatically shapes the LTL proxy reward based on cycles within a \buchiword automaton, alleviating a sparsity issue that often plagues LTL-driven RL approaches. CyclER enables LTL-constrained policy optimization in continuous spaces using function approximators. We extend CyclER to effectively use quantitative semantics for full LTL and demonstrate its success empirically. 

There are numerous directions for future work. For example, the reward shaping idea behind CyclER can be extended to other classes of logical specifications, such as Reward Machines~\citep{Icarte2020RewardMachine}. We are also interested in applying CyclER to accelerate learning in multi-task LTL settings, such as~\citep{vaezipoor2021ltl2action, qiu2023gcrlltl}.

\subsection*{Acknowledgments}
The authors would like to thank Adwait Godbole, Federico Mora and Niklas Lauffer for their helpful feedback. This work was supported in part by an NDSEG Fellowship (for Shah), ONR Award No. N00014-20-1-2115, DARPA contract FA8750-23-C-0080 (ANSR), C3DTI, Toyota and Nissan under the iCyPhy center, and by NSF grant 1545126 (VeHICaL).

\bibliography{icml2024}
\bibliographystyle{tmlr}
\newpage
\appendix
\tableofcontents
\newpage
\section{Limitations}
\label{sec:appendix_limitations}
The success of CyclER is ultimately limited to the quality and achievability of the atomic propositional variables in a given environment. If robustness measures are not available and a task specification has relatively few variables that are difficult to satisfy, CyclER will not provide significant improvement over existing unshaped LTL reward proxies. For example, the specification $F(G(x))$ with no robustness measure for $x$ will offer the same reward under CyclER and existing methods. When robustness measures are available, it is important that measures for each variable in the set $\AP$ are of a similar scale, so that the QS-shaped rewards do not vary highly across different transitions in $\buchi$. The issue of needing similar scale for robustness measures is well-known in the temporal logic literature and is an open direction for future work~\cite{balakrishna2019bhnr, li2017reinforcement}.

Although we show that the performance of policy learning is somewhat robust to $\lambda$ in our formulation of $\rdual$, we do not have a systematic way of finding an appropriate value for $\lambda$ beyond traditional hyperparameter search methods. We see an interesting opportunity for future work to intelligently search for $\lambda$ based on the desired $\rltl$ and $\rmdp$ of a user.

The CyclER approach incurs computational overhead by (1) computing all possible cycles prior to policy learning and (2) keeping track of all potential reward values for each cycle for each trajectory stored in an agent's replay buffer. We did not observe a significant slowdown or memory increase in our experiments as a result of this overhead.
We acknowledge that in complex specifications with a large number of cycles this overhead may become meaningful.

\section{Proof for Theorem~\ref{lambdalemma}}
\label{sec:lambda_proof}

\begin{proof}

Consider two policies: (1) $\pi \in \Pi \setminus \Pi^\ast$, which does not achieve $\vmax$, (2) $\tilde{\pi} \in \Pi^\ast$, achieving $\vmax$. Let $R_{\max}$ and $R_{\min}$ be upper and lower bounds on the maximum and minimum achievable single-step reward in $\mdp$, respectively. Evaluating objective \ref{eq:true_objective} for both of these policies satisfies the following series of inequalities:
\begin{equation*}
    \reward_{\tilde{\pi}} + \lambda V_{\tilde{\pi}} \stackrel{(a)}{\geq} \frac{R_{\min}}{1-\gamma} + \lambda (V_{\pi} + \epsilon) \stackrel{(b)}{\geq} \frac{R_{\max}}{1-\gamma} + \lambda V_{\pi} \stackrel{(c)}{\geq} \reward_\pi + \lambda V_{\pi} 
\end{equation*}
where $(a)$ follows from assumption \ref{gap_assumption} and bounding the worst-case MDP value, $(b)$ follows from selecting $\lambda > \frac{R_{\max} - R_{\min}}{\epsilon (1 - \gamma)} (\equiv \lambda^\ast)$, $(c)$ follows since the highest MDP value achievable by $\pi$ must be upper bounded by the best-case MDP value.

As a consequence of $(a-c)$ we see that policies achieving $\vmax$ are preferred by objective \ref{eq:true_objective}. Consider $\pi^\ast \in \Pi^\ast$, the solution to objective \ref{eq:true_objective}. Thus, since $\pi^\ast \in \Pi^\ast$, then $\pi^\ast$ must also achieve $V_{\pi^\ast} = V_{\tilde{\pi}} = V_{\max}$. Therefore, in comparing objective \ref{eq:true_objective} for both $\pi^\ast$ and $\tilde{\pi}$ it follows immediately that $\mathcal{R}_{\pi^\ast} \geq \mathcal{R}_{\tilde{\pi}}$ since $\pi^\ast$ is optimal for objective \ref{eq:true_objective}. Since the choice of $\tilde{\pi}$ is arbitrary, we have shown that $\pi^\ast$ is also a solution to objective \ref{eq:new_objective}. \end{proof}

\subsection{On the existence of Assumption~\ref{gap_assumption}.} 
\label{sec:gap_existence}

If we are restricted to stationary policies and the space of policies $\Pi$ is finite, then assumption~\ref{gap_assumption} will always hold. A finite space of policies can be enumerated over, and we can take the difference between the optimal and next-best policies to find $\epsilon$. As an example, consider a toy MDP with a continuous, 1-dimensional state space $[0, 1]$ and continuous, 1-dimensional action space $[0, 1]$, where the transition function determines the next state as the agent's action, i.e. $T^\mdp(s, a, s') = a$. Suppose we are given a task specification $G(F(1))$. Under this specification, the agent will receive a reward of 1 every time it outputs 1, and 0 otherwise.

Consider a finite-sized policy class $\Pi$ of just two deterministic policies: $\pi_0$ that only outputs 0, and $\pi_1$ that only outputs 1. Here, assumption~\ref{gap_assumption} holds even in this continuous space.

However, assumption~\ref{gap_assumption} is not limited to just finite-sized policy classes. Consider an infinite-sized $\Pi$, where one policy in $\Pi$, called $\pi^*$, always outputs 1, and all other policies are Gaussian policies with $\sigma = 0.0001$ and $\mu$ uniformly sampled from the interval $[0, 0.0001]$. Here, even when $\Pi$ is infinite and contains stochastic policies in a continuous space, assumption B.1 holds.

Although the aforementioned examples are toyish, they demonstrate that although assumption~\ref{gap_assumption} is always true when $\Pi$ is finite, this is not the only case. Further characterization for when the assumption holds is left for future work.

\subsection{On the existence of a solution to~\ref{eq:new_objective}}
\label{sec:solution_existence}

In our definition of $\Pi^*$ in~\ref{def:probability_optimal} and the formulation of our objective in~\ref{eq:objective}, it is possible that no solution to~\ref{eq:objective} exists in settings where $\Pi^*$ is empty.
This non-existence issue reoccurs in objective~\ref{eq:new_objective}, where it is possible that no policies exist that achieve $\vmax$. In all of these cases, non-existence is a result of an infinite-sized $\Pi$ where a sequence of policies exists in $\Pi$ that come increasingly arbitrarily close to achieving $\vmax$ (or, in the case of \ref{eq:objective}, to achieving the optimal probability of satisfying $\taskspec$), without ever reaching the maximum value. 

In these cases, we clarify what a policy that optimizes our true objective~\ref{eq:true_objective} is actually achieving. Recall that we ultimately aim to optimize a proxy objective of $\reward_\pi  + \lambda V_{\pi}$, under a sufficiently large $\lambda$. If the set $\Pi^*$ is empty, then by definition, there does not exist a ``gap'' between policies in $\Pi^*$ and policies outside of it. In other words, the value of $\epsilon$ is 0, and Assumption~\ref{gap_assumption} does not hold. 
As a result, as $\lambda$ tends to infinity, the sequence of policies that is increasingly optimal with respect to $\reward_\pi  + \lambda V_{\pi}$ will always prioritize increasing $V_{\pi}$ over $\reward_\pi$. In other words, if assumption~\ref{gap_assumption} does not hold, optimizing~\ref{eq:true_objective} will continually try to improve the proxy reward for LTL satisfaction and will ignore resulting changes to the MDP reward $\reward_\pi$.

It is theoretically possible that subsequent policies along the aforementioned sequence have arbitrarily different $\reward_\pi$, which is undesirable. However, in practice, this does not tend to be the case, and policies that learn to optimize objective~\ref{eq:true_objective} exhibit behavior that is apparently both LTL-satisfying and performant in MDP reward, as evidenced by our experiments in Section~\ref{sec:experiments}. Generally, values for $\reward_\pi$ are not highly unstable along the sequence of policies that are increasingly optimal with respect to $V_{\pi}$, and allowing $\lambda$ to serve as a hyperparameter effectively enables a tradeoff to value $\reward_\pi$ against $V_{\pi}$ (see Section~\ref{sec:experiments}).

\section{Proof for Theorem~\ref{cycleroptimality}}
\label{sec:cyclerproof}

We start with some notation. Let $\rcyclertau$ represent the reward function for a trajectory $\tau$ that are returned by the execution of Alg.~\ref{alg:cycler_alg}.
Let $T(\tau)$ be the set of timesteps when an accepting state in $\buchi$ is visited for a trajectory. 
We write the value function for CyclER, letting $\Gamma_t$ be the same function as defined in function~\ref{eq:eventualdiscounting}:

\begin{assumption} \label{assump:finite}
Suppose $T_{\max} = \max_{\pi \in \Pi}\mathbb{E}_{\tau \sim \mathcal{M}_{\pi}^P}\left[T(\tau) \bigg| 
\tau \not\models \varphi \right] < M$, there is a uniform bound on the last time a bad (non-accepting) trajectory visits an accepting state across all bad trajectories induced by any policy.
\end{assumption}

\begin{lemma}\label{lem:optimizingcycler}
Under Assumption \ref{assump:finite}, for any $\pi \in \Pi$ and $\epsilon >0$ we have
\begin{equation*}
|(1-\gamma)  V_{\pi}^{\text{cyc}} - \mathbb{P}[\pi \models \varphi]| \leq \epsilon
\end{equation*}
when $\gamma \geq (1-\epsilon)^{\frac{1}{M+1}}$ is chosen appropriately.
\end{lemma}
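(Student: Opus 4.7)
I would mirror the structure of Voloshin et al.\ (2023)'s proof of eventual discounting (Theorem~4.2), adapted to the CyclER-shaped reward. First I would unpack the definition
$V_\pi^{\text{cyc}} = \mathbb{E}_{\tau \sim \mdp^{\taskspec}_\pi}\bigl[\sum_{t=0}^\infty \Gamma_t\, \rcyclertau(t)\bigr]$
and partition trajectories by whether they satisfy $\varphi$:
\[
V_\pi^{\text{cyc}} = \mathbb{E}\bigl[V^{\text{cyc}}(\tau)\,\mathbf{1}_{\tau \models \varphi}\bigr] + \mathbb{E}\bigl[V^{\text{cyc}}(\tau)\,\mathbf{1}_{\tau \not\models \varphi}\bigr].
\]
The idea is that each "segment" between two consecutive visits to $\buchiaccepts$ contributes reward close to $1$, because the $\operatorname{argmax}$ in Algorithm~\ref{alg:cycler_alg} picks some MAC/MAIP whose transitions are fully traversed, giving $|c| \cdot \tfrac{1}{|c|} = 1$.

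Next I would handle the accepting case. For an accepting trajectory, the number of completed segments is infinite and each contributes exactly $1$ in undiscounted reward. Within the $k$-th segment, $\Gamma_t = \gamma^{j_t}$ equals $\gamma^{k-1}$ at non-accepting steps and $\gamma^k$ at the terminal accepting step. Summing the geometric series gives
\[
V^{\text{cyc}}(\tau) \in \left[\tfrac{\gamma}{1-\gamma},\; \tfrac{1}{1-\gamma}\right] \qquad \text{for } \tau \models \varphi.
\]
For non-accepting trajectories, Assumption~\ref{assump:finite} bounds the number of completed segments by $M$, so bounding each segment's reward by $1$ and summing yields
\[
V^{\text{cyc}}(\tau) \le \tfrac{1-\gamma^{M+1}}{1-\gamma} \qquad \text{for } \tau \not\models \varphi.
\]

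Assembling, I would show the two-sided bound
\[
\gamma\,\mathbb{P}[\pi \models \varphi] \;\le\; (1-\gamma) V_\pi^{\text{cyc}} \;\le\; \mathbb{P}[\pi \models \varphi] + (1-\gamma^{M+1})\,\mathbb{P}[\pi \not\models \varphi],
\]
so that $|(1-\gamma)V_\pi^{\text{cyc}} - \mathbb{P}[\pi \models \varphi]| \le \max(1-\gamma,\; 1-\gamma^{M+1}) = 1 - \gamma^{M+1}$. The choice $\gamma \ge (1-\epsilon)^{1/(M+1)}$ is exactly equivalent to $1-\gamma^{M+1} \le \epsilon$, which closes the argument.

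The main obstacle I anticipate is rigorously justifying the "segment reward equals $1$" claim, since the $\operatorname{argmax}$ over $\cycles$ (or $\maips$) might in principle fail to identify a fully-traversed cycle if the MDP trajectory's induced LDBA path doesn't exactly match any MAC. I would address this by observing that whenever an accepting state is reached, the induced LDBA path between accepting visits decomposes into some MAC (possibly with interposed subcycles whose transitions contribute $0$ under $e$-masking), so at least one $c \in \cycles$ accumulates the full unit of reward, and the $\operatorname{argmax}$ therefore picks something at least as large. A secondary subtlety is whether Assumption~\ref{assump:finite} is to be interpreted pathwise or only in expectation; under the latter, a Markov-style argument together with the bound $(1-\gamma^{N+1})/(1-\gamma) \le N+1$ gives essentially the same conclusion, since $(1-\gamma)(M+1) \le \epsilon$ is implied by $\gamma \ge (1-\epsilon)^{1/(M+1)}$ via Bernoulli's inequality.
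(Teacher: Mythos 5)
Your proposal follows essentially the same route as the paper's proof: condition on whether $\tau \models \varphi$, argue that each completed segment between accepting visits contributes a unit of (eventually discounted) cycle reward, use Assumption~\ref{assump:finite} to truncate the contribution of non-accepting trajectories at $M$, and combine the resulting upper bound $p/(1-\gamma) + (1-p)(1-\gamma^{M+1})/(1-\gamma)$ with a lower bound of order $p/(1-\gamma)$ before choosing $\gamma \geq (1-\epsilon)^{1/(M+1)}$. Your two-sided bound is in fact slightly more careful than the paper's (which asserts the initial segment contributes exactly $1$, ignoring the $\gamma$ discount applied at the accepting-visit timestep itself), and your remarks on the argmax achieving the full segment reward and on the pathwise reading of Assumption~\ref{assump:finite} address gaps the paper leaves implicit, but the argument is the same.
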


\begin{proof}
We follow the proof style of Lemma 4.1 from \citet{voloshin2023eventual}. Let $\mathbb{P}[\pi \models \varphi] = p$ be the probability that $\pi$ satisfies the LTL specification $\varphi$. Recall the value function 
\begin{equation*} V_{\pi}^{\text{cyc}} = \mathop{\mathbb{E}}_{\sampletau} \biggr[ \sum^{\infty}_{t = 0} \Gamma_t \rcyclertau[t]\biggl]
\end{equation*}


Let $T(\tau)_{(i)}$ be the (random) $i$-th visit to an accepting state in $\buchi$. Let $T(\tau)_{(0)}, T(\tau)_{(-1)}$ refer to the first and last visit, respectively.

Because $\rcyclertau[t] = \frac{1}{|c|}$ only when a transition in a cycle is taken (and only once per that transition) and the distance between successive visits to an accepting state is $|c|$ then at most $\gamma^i$ reward is accumulated between successive visits to an accepting state. In other words  $\mathop{\mathbb{E}}_{\sampletau} \biggr[ \sum^{T_{(i+1)}}_{t = T_{(i)} + 1} \Gamma_t \rcyclertau[t]\biggl] = \gamma^i$ and therefore $V_{\pi}^{\text{cyc}} \leq \frac{1}{1-\gamma}$.

Further, every trajectory $\tau$ is decomposable into (1) the partial trajectory up to the first visit (ie. at time $T(\tau)_{(0)}$), the partial trajectory between the first and last visit (ie. between time $T(\tau)_{(0)}$ and $T(\tau)_{(-1)}$), and (3) the remainder of the trajectory. For trajectories that satisfy the LTL specification, $T(\tau)_{(-1)} = \infty$, otherwise $T(\tau)_{(-1)} \leq M$, finite and bounded (by Assumption~\ref{assump:finite}). For ease of notation, we omit the dependence of $T$ on $\tau$ (i.e. we write $T(\tau)_{(0)}$ as $T_{(0)}$). By linearity of expectation, we can rewrite our previous equation as:

\begin{equation*}
V_{\pi}^{\text{cyc}} = \mathop{\mathbb{E}}_{\sampletau} \biggr[ \sum^{T_{(0)}}_{t = 0} \Gamma_t \rcyclertau[t]\biggl] + \mathop{\mathbb{E}}_{\sampletau} \biggr[ \sum^{T_{(-1)}}_{t = T_{(0)} + 1} \Gamma_t \rcyclertau[t]\biggl] + \mathop{\mathbb{E}}_{\sampletau} \biggr[ \sum^{\infty}_{t = T_{(-1)} + 1} \Gamma_t \rcyclertau[t]\biggl].
\end{equation*}

When a path $\tau$ is accepting, by definition, $\mathop{\mathbb{E}}_{\sampletau} \biggr[ \sum^{T_{(0)}}_{t = 0} \Gamma_t \rcyclertau[t]\bigg| \tau \models \varphi \biggl] = 1$ because every accepting initial path will achieve a reward of $1$.

By considering accepting trajectories of LTL formula $\varphi$, then $T_{(-1)} = \infty$:
\begin{equation*}
V_{\pi}^{\text{cyc}} \geq  \left(1 + \mathop{\mathbb{E}}_{\sampletau} \biggr[ \sum^{\infty}_{t = T_{(0)} + 1} \Gamma_t \rcyclertau[t] \bigg| \tau \models \varphi \biggl] \right) \mathbb{P}[\pi \models \varphi] = \frac{p}{1-\gamma}   
\end{equation*}
where the inequality follows from having dropped any value from non-satisfying trajectories. On the other hand, by the law of total expectation, for a lower bound we have:
\begin{align*} V_{\pi}^{\text{cyc}} &= p\mathop{\mathbb{E}}_{\sampletau} \biggr[ \sum^{\infty}_{t = 0} \Gamma_t \rcyclertau[t] \bigg| \tau \models \varphi \biggl]  + (1-p) \mathop{\mathbb{E}}_{\sampletau} \biggr[ \sum^{\infty}_{t = 0} \Gamma_t \rcyclertau[t] \bigg| \tau \not\models \varphi \biggl]  \\
&\leq p \frac{1}{1-\gamma} + (1-p) \frac{1 - \gamma^{M + 1}}{1 - \gamma}
\end{align*}
where the first term comes from the upper bound on $V_{\pi}^{\text{cyc}} \leq \frac{1}{1-\gamma}$ and the second term comes from bounding $T_{(0)}$ with a uniform upper bound $M$ by Assumption \ref{assump:finite} 

Combining the upper and lower bound together and subtracting off $p$ from both sides, we have
\begin{align*}  0 \leq (1-\gamma)V_{\pi}^{\text{cyc}} - p \leq 1 - \gamma^{M + 1}
\end{align*}

Select $\gamma \geq (1-\epsilon)^{\frac{1}{M + 1}}$ which implies that
\begin{equation*}
    |(1-\gamma)V_{\pi}^{\text{cyc}} - p| \leq \epsilon
\end{equation*}

\end{proof}

\begin{lemma}\label{lem:cyclerisLTLoptimal}
Let $p^\ast = \max_{\pi \in \Pi}\mathbb{P}[\pi \models \varphi]$. Under Assumption \ref{assump:finite}, then any policy $\pi$ optimizing $ V^{\text{cyc}}_\pi$ (ie. achieving $V^{\text{cyc}}_{\max}\}$ maintains $|V^{\text{cyc}}_{\pi} - p^\ast| \leq \epsilon$.
\end{lemma}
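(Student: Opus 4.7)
The plan is to reduce this statement directly to Lemma \ref{lem:optimizingcycler}, which already gives a two-sided bound between $(1-\gamma) V^{\text{cyc}}_\pi$ and $\mathbb{P}[\pi \models \varphi]$ for \emph{every} policy. The key move is to apply that lemma simultaneously to two policies: the $V^{\text{cyc}}$-maximizer $\pi$ (call its satisfaction probability $p_\pi$) and a probability-optimal policy $\pi^\ast$ witnessing $p^\ast$. Using a tolerance of $\epsilon/2$ in the lemma (with the correspondingly larger $\gamma \geq (1-\epsilon/2)^{1/(M+1)}$), I would obtain
\begin{equation*}
|(1-\gamma) V^{\text{cyc}}_\pi - p_\pi| \leq \epsilon/2, \qquad |(1-\gamma) V^{\text{cyc}}_{\pi^\ast} - p^\ast| \leq \epsilon/2.
\end{equation*}

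Next I would sandwich $(1-\gamma) V^{\text{cyc}}_\pi$ between $p^\ast - \epsilon/2$ and $p^\ast + \epsilon/2$. The lower side uses optimality of $\pi$ for $V^{\text{cyc}}$: since $V^{\text{cyc}}_\pi \geq V^{\text{cyc}}_{\pi^\ast}$, the second inequality above yields
\begin{equation*}
(1-\gamma) V^{\text{cyc}}_\pi \;\geq\; (1-\gamma) V^{\text{cyc}}_{\pi^\ast} \;\geq\; p^\ast - \epsilon/2.
\end{equation*}
The upper side uses maximality of $p^\ast$ over all policies: $p_\pi \leq p^\ast$, so the first inequality gives
\begin{equation*}
(1-\gamma) V^{\text{cyc}}_\pi \;\leq\; p_\pi + \epsilon/2 \;\leq\; p^\ast + \epsilon/2.
\end{equation*}
Combining both sides gives $|(1-\gamma) V^{\text{cyc}}_\pi - p^\ast| \leq \epsilon/2 \leq \epsilon$, which is the claim (interpreting the stated bound $|V^{\text{cyc}}_\pi - p^\ast|$ as $|(1-\gamma) V^{\text{cyc}}_\pi - p^\ast|$, consistent with the scaling in Lemma \ref{lem:optimizingcycler}).

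There is no real obstacle here; the proof is essentially a two-line corollary of the preceding lemma combined with the definitions of $V^{\text{cyc}}_{\max}$ and $p^\ast$. The only subtlety is bookkeeping the tolerance, i.e.\ using $\epsilon/2$ in Lemma \ref{lem:optimizingcycler} so that the two one-sided errors compose into a final $\epsilon$ bound, and being careful about which direction of inequality comes from which optimization (value-optimality supplies the lower bound, probability-optimality supplies the upper bound).
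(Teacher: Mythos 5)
Your proof is correct and follows essentially the same route as the paper, which simply defers to the standard sandwich argument of Theorem 4.2 in \citet{voloshin2023eventual}: apply Lemma \ref{lem:optimizingcycler} to both the $V^{\text{cyc}}$-maximizer and the probability-maximizer, then use the two optimality directions to bound each side. Your halving of the tolerance to $\epsilon/2$ is harmless but unnecessary (each side of the sandwich invokes the lemma only once, so the original $\gamma \geq (1-\epsilon)^{1/(M+1)}$ already yields the $\epsilon$ bound), and your reading of the statement as $|(1-\gamma)V^{\text{cyc}}_\pi - p^\ast| \leq \epsilon$ matches what the paper's own proof actually establishes.
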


\begin{proof}
This follows by an identical argument as in Theorem 4.2 in \citet{voloshin2023eventual}, by using  Lemma \ref{lem:optimizingcycler}: $V^{\text{cyc}}$-optimizing policy $\pi^\ast_{\text{cyc}}$ must satisfy $|(1-\gamma) V^{\text{cyc}}_{\max} - p^\ast| \leq \epsilon$ when  $\gamma$ is selected as in Lemma \ref{lem:optimizingcycler}.

\end{proof}

\section{Additional Experimental Results}
\begin{figure*}[t]
\centering
    \begin{minipage}{0.33\linewidth}
        \centering
        \par \textbf{FlatWorld}
        \centering
        \par
        \includegraphics[width=\textwidth]{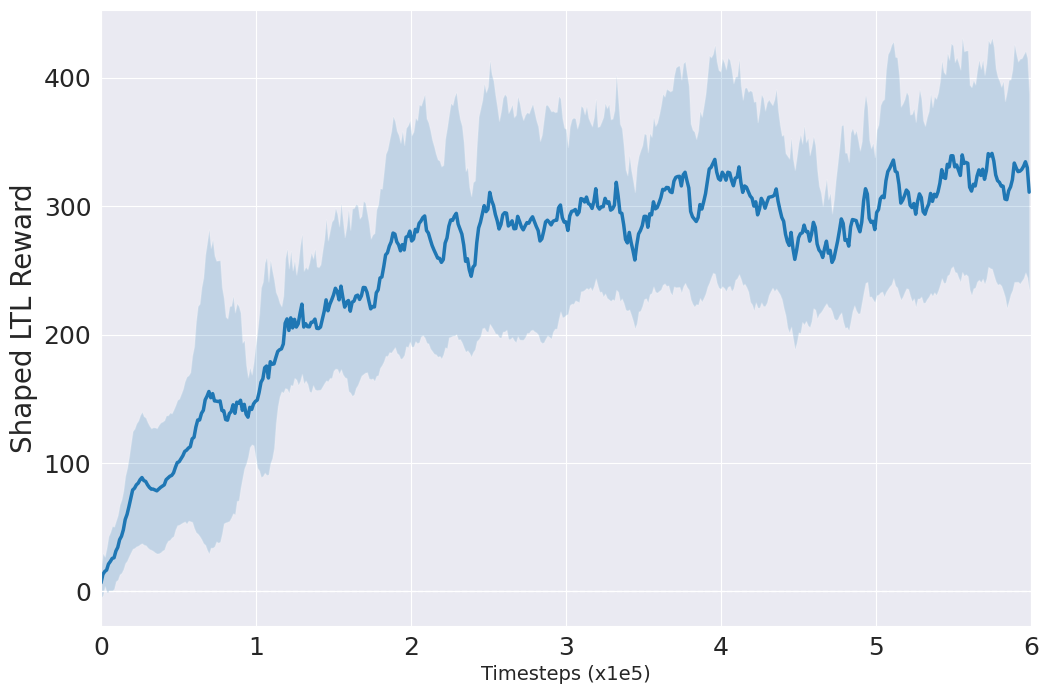}
    \end{minipage}
    \begin{minipage}{0.33\linewidth}
        \centering
        
        \par \textbf{ZonesEnv}
        \centering
        \par
        \includegraphics[width=\textwidth]{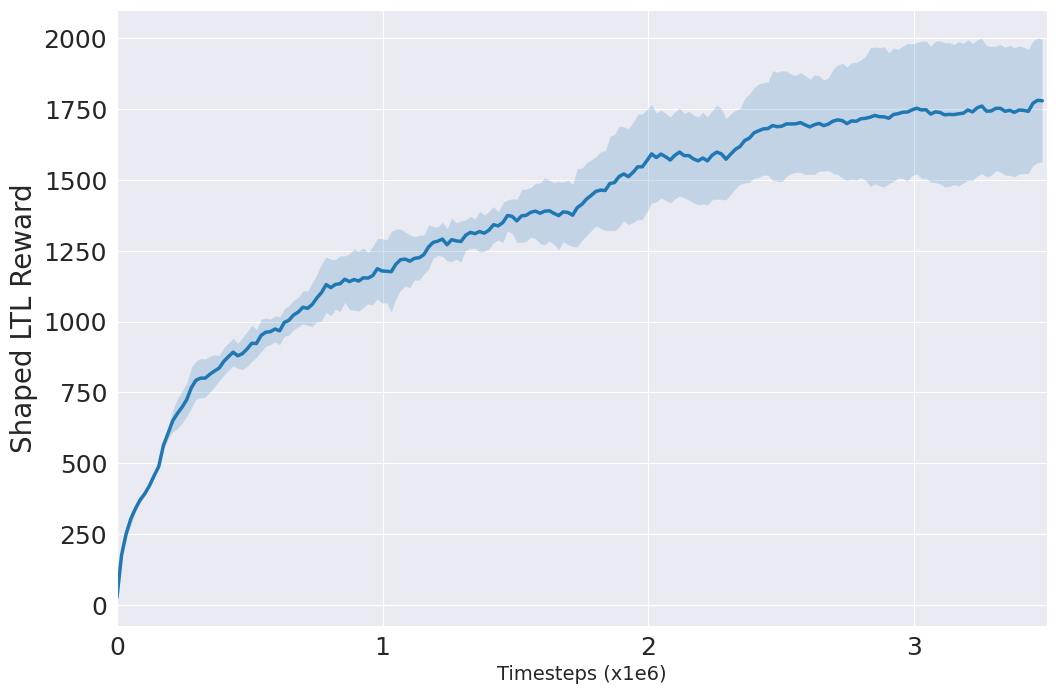}
    \end{minipage}
    \begin{minipage}{0.33\linewidth}
        \centering
        \par \textbf{ButtonsEnv}
        \centering
        \par
        \includegraphics[width=\textwidth]{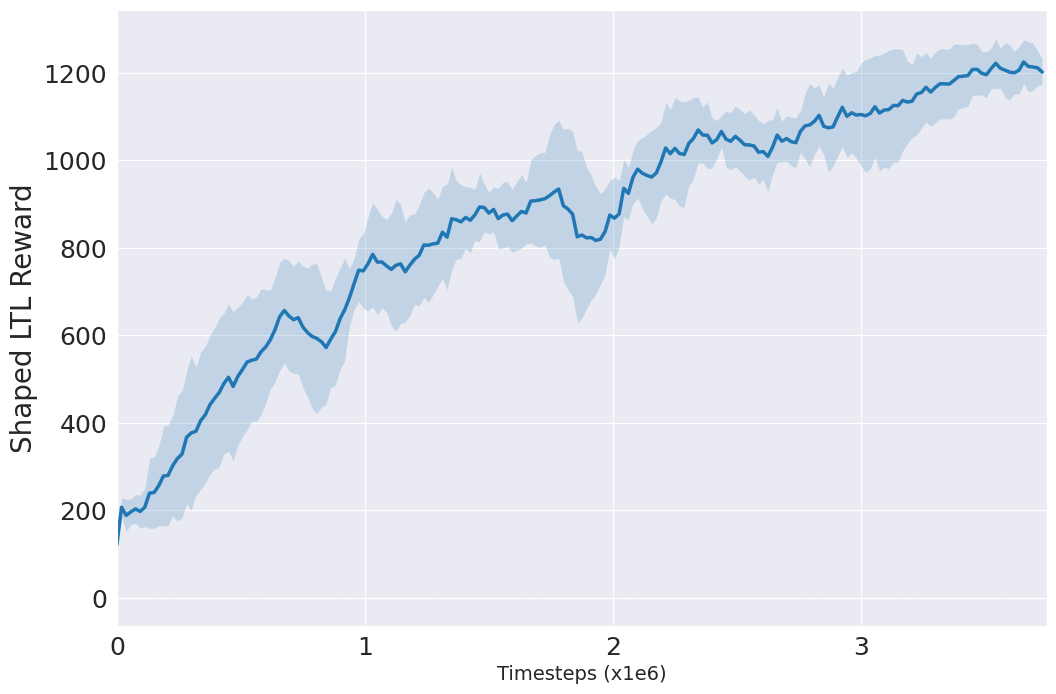}
    \end{minipage}
    \vskip -0.5em
    \centering
    \caption{Training curves showing the \textit{shaped} $\rltl$ achieved by CyclER-learned policies, averaged over 5 random seeds. Each point is the mean of 10 stochastic policy rollouts.}
\label{fig:shaped_reward_plots}
\end{figure*}

\label{sec:appendix_addl_experiment_results}
\begin{figure*}[t]
\centering
\begin{minipage}{0.6\linewidth}
    \begin{minipage}{0.33\linewidth}
        \centering
        \par \textbf{CyclER}
        \centering
        \par
        \includegraphics[width=\textwidth]{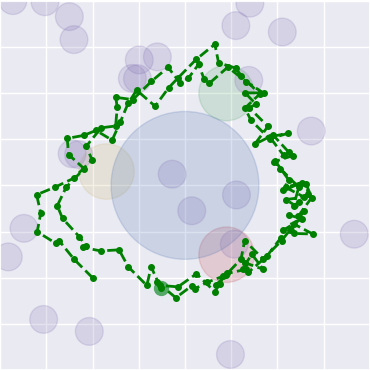}
    \end{minipage}\hfill
    \begin{minipage}{0.33\linewidth}
        \centering
        
        \par \textbf{LCER}
        \centering
        \par
        \includegraphics[width=\textwidth]{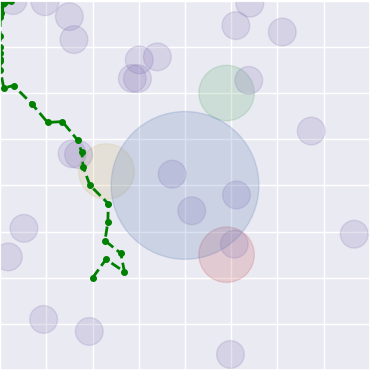}
    \end{minipage}\hfill
    \begin{minipage}{0.33\linewidth}
        \centering
        \par \textbf{LCER (no $\rmdp$)}
        \centering
        \par
        \includegraphics[width=\textwidth]{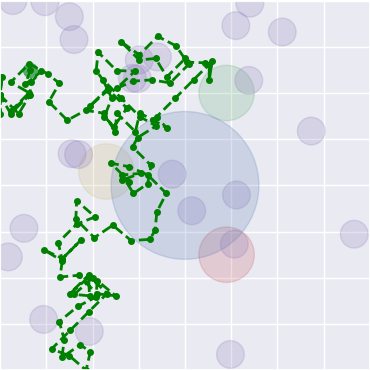}
    \end{minipage}
\end{minipage}
\begin{minipage}{0.6\linewidth}
    \begin{minipage}{0.33\linewidth}
        \centering
        \par
        \includegraphics[width=\textwidth]{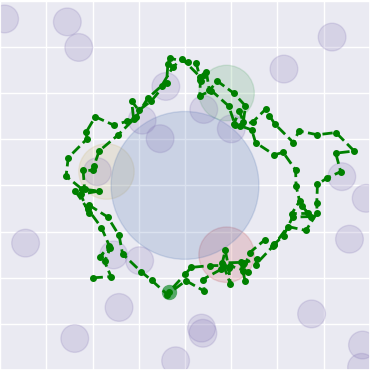}
    \end{minipage}\hfill
    \begin{minipage}{0.33\linewidth}
        \centering
        \par
        \includegraphics[width=\textwidth]{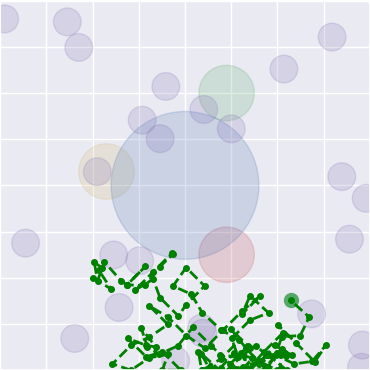}
    \end{minipage}\hfill
    \begin{minipage}{0.33\linewidth}
        \centering
        \par
        \includegraphics[width=\textwidth]{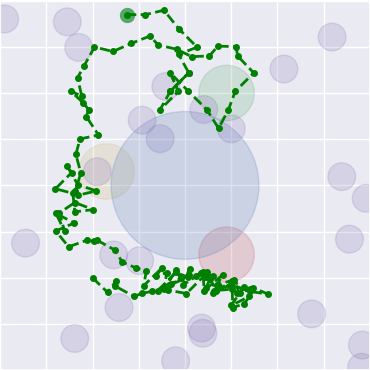}
    \end{minipage}
\end{minipage}

    \vskip -0.5em
    \centering
    \caption{Sample trajectories from each baseline method in the FlatWorld domain (each row is a different seed). CyclER is able to consistently learn behavior that repeatedly visits and accepting state. The LCER baseline cannot achieve this long horizon task and instead optimizes for $\rmdp$. The LCER (no $\rmdp)$ baseline, even when successful in visiting the accepting state (bottom right), does qualitatively learn satisfying behavior despite visiting the accepting state.}
\label{fig:sample_flatworld_trajs}
\end{figure*}

\textbf{Shaped Reward Training Curves} In Figure~\ref{fig:shaped_reward_plots}, we provide training curves for CyclER's performance on the \textit{shaped} LTL proxy reward (i.e., CyclER-shaped reward) in each of our experimental domains. These curves allow us to compare the difference in performance between unshaped and shaped LTL reward and provide insight into a potential correlation between the two. We notice that in all baselines, CyclER-learned policies are able to achieve nonzero shaped reward early in training. Our learned policies achieve nonzero \textit{unshaped reward} (recall the curves in Figure~\ref{fig:training_plots}) slightly after significant returns in shaped reward are achieved, which supports our hypothesis that CyclER-shaped reward will successfully `guide' a policy towards visiting the LDBA accepting state. We note that the scales of the CyclER-shaped LTL rewards visualized in Figure~\ref{fig:shaped_reward_plots} vary significantly; the exact numerical values are dependent on specific environments and do not hold any exact meaning with respect to the unshaped LTL reward.

\textbf{Qualitative Analysis} To provide more insight into how CyclER learns to repeatedly visit the accepting state, we visualize sample trajectories from policies learned by each of our baselines in the FlatWorld domain, and present these samples in Figure~\ref{fig:sample_flatworld_trajs}. Recall that in this domain, our LTL specification instructs an agent to traverse the red, yellow, and green regions indefinitely, while avoiding the blue region. It is obvious that the most efficient path to achieve this behavior is to navigate around the blue region and visit each colored region along a circular path. 

On the left hand column of Figure~\ref{fig:sample_flatworld_trajs}, we see that policies learned using CyclER are able to perform this behavior, only slightly diverting from the circular path to visit purple regions and collect reward from $\rmdp$. In the middle column, we visualize trajectories from policies learned where LCER is used as $\rltl$. Due to the sparsity of this reward function, the policy quickly finds areas in the MDP where purple regions are clustered together, and repeatedly visits those areas (in the top middle, it traverses to a corner where the entropy of the policy will affect its position the least.) On the right hand column, we show trajectories from LCER trained without $\rmdp$ in its reward formulation. On the top right, we see that the baseline fails to achieve the task at all. More interestingly, however, on the bottom right, we find that the baseline does indeed achieve the task (i.e., visits the accepting state once), but does not exhibit behavior that qualitatively satisfies our task specification. After completing the task once, the agent turns back in the opposite direction and does not make obvious progress back towards either the red or yellow regions, suggesting that this baseline has not learned to repeatedly satisfy the task.

\begin{wrapfigure}{h}{0.3\textwidth}
    \centering
    \vskip -2.5em
    \begin{minipage}{0.49\linewidth}
        \centering
        \includegraphics[width=0.95\textwidth]{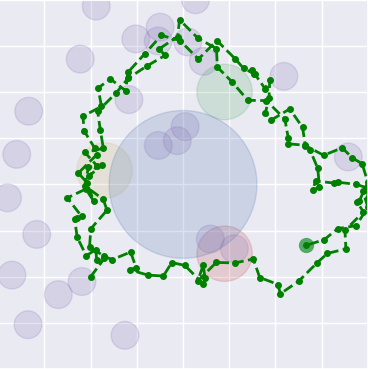}
    \end{minipage}\hfill
    \begin{minipage}{0.49\linewidth}
        \centering
        \par
        \includegraphics[width=0.95\textwidth]{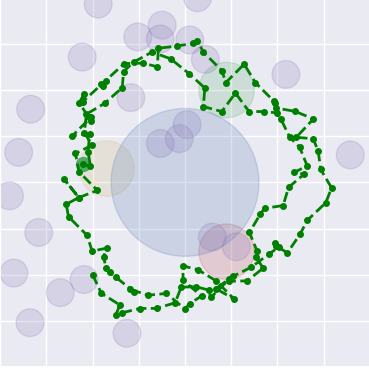}
    \end{minipage}\hfill
    \vskip -0.5em
    \caption{Trajectories from a CyclER-trained policy on  $\rdual$ (left) and a CyclER-trained policy on just the shaped $\rltl$ (right) in the FlatWorld domain.}
    \vskip -1em
\label{fig:flatworld_relative_rewards}
\end{wrapfigure}

We also provide sample trajectories from the two policies learned in the experiment addressing question \textbf{2} in section~\ref{sec:experiments}. The trajectories, visualized in Figure~\ref{fig:flatworld_relative_rewards}, show the difference in behavior between the two successful policies when one is aware of $\rmdp$ during training and when one is not. The MDP reward-aware policy acts notably different, altering its trajectory to reach purple bonus areas (even those that are somewhat far away from the colored regions of interest), while the MDP reward-unaware policy makes no such adjustments and optimizes for completing the LTL task as often as possible.

\begin{figure}
     \centering
        \includegraphics[width=0.5\linewidth, keepaspectratio]{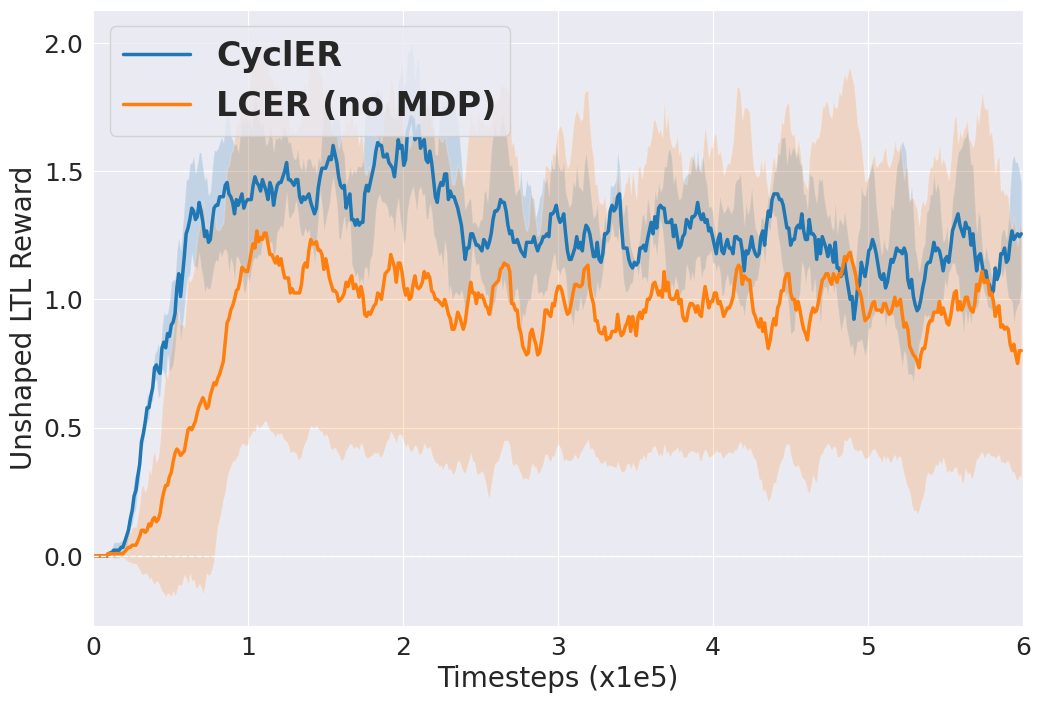}
    \caption{Training curve comparing unshaped LTL reward achieved by CyclER and LCER in the FlatWorld domain with $\taskspec = G (F(r) \&  F(y)) \& G( \neg b)$. Performance is averaged over 5 random seeds. Each point is the mean of 10 stochastic policy rollouts.}
    \vskip -1em
    \label{fig:lcer_experiment}
\end{figure}

\textbf{Direct comparison in LTL-only setting} To more directly evaluate CyclER's efficacy in alleviating the reward sparsity problem, we recreate the FlatWorld experiment from~\citep{voloshin2023eventual} with the LTL specification $G (F(\text{red}) \&  X(F(\text{yellow}))) \& G( \neg \text{blue})$. This domain does not have an additional MDP reward to be optimized. As such, the LCER baseline is able to consistently accomplish this objective. This allows us to observe whether CyclER-based reward shaping provides improvements in LTL-guided policy learning even in the absence of an MDP reward. We visualize the training curves in Figure~\ref{fig:lcer_experiment}. We can see that CyclER not only converges to a satisfying policy more quickly, but consistently achieves a higher reward when compared to the LCER baseline. This confirms our hypothesis that reward shaping is valuable even in settings where the LTL specification serves as the lone objective.


\section{Quantitative Semantics for LTL}
\label{sec:appendix_qs}
Practitioners have defined \textit{Quantitative Semantics} (QS) for a number of temporal logics in order to quantify the measure of satisfaction for a given specification. These semantics, originally developed to monitor how close hybrid control systems are to violating properties reliant on continuous-value sensor data~\citep{maler04stl}, have since been used in reinforcement learning to learn policies that satisfy formulae in a variety of specification languages, including Signal Temporal Logic (STL)~\citep{li2017reinforcement, balakrishna2019bhnr}. However, existing methods of QS for reward shaping fail to effectively extend to indefinite-horizon LTL tasks. In what follows, we will introduce QS for LTL, discuss why na{\"i}vely using QS as reward is ill-fitted to deep RL for LTL, and introduce our own approach, extending the CyclER reward shaping method to effectively incorporate QS. 

\begin{wrapfigure}{h}{0.25\textwidth}
\vspace{-1.5em}
  \begin{center}
        \includegraphics[width=0.25\textwidth]{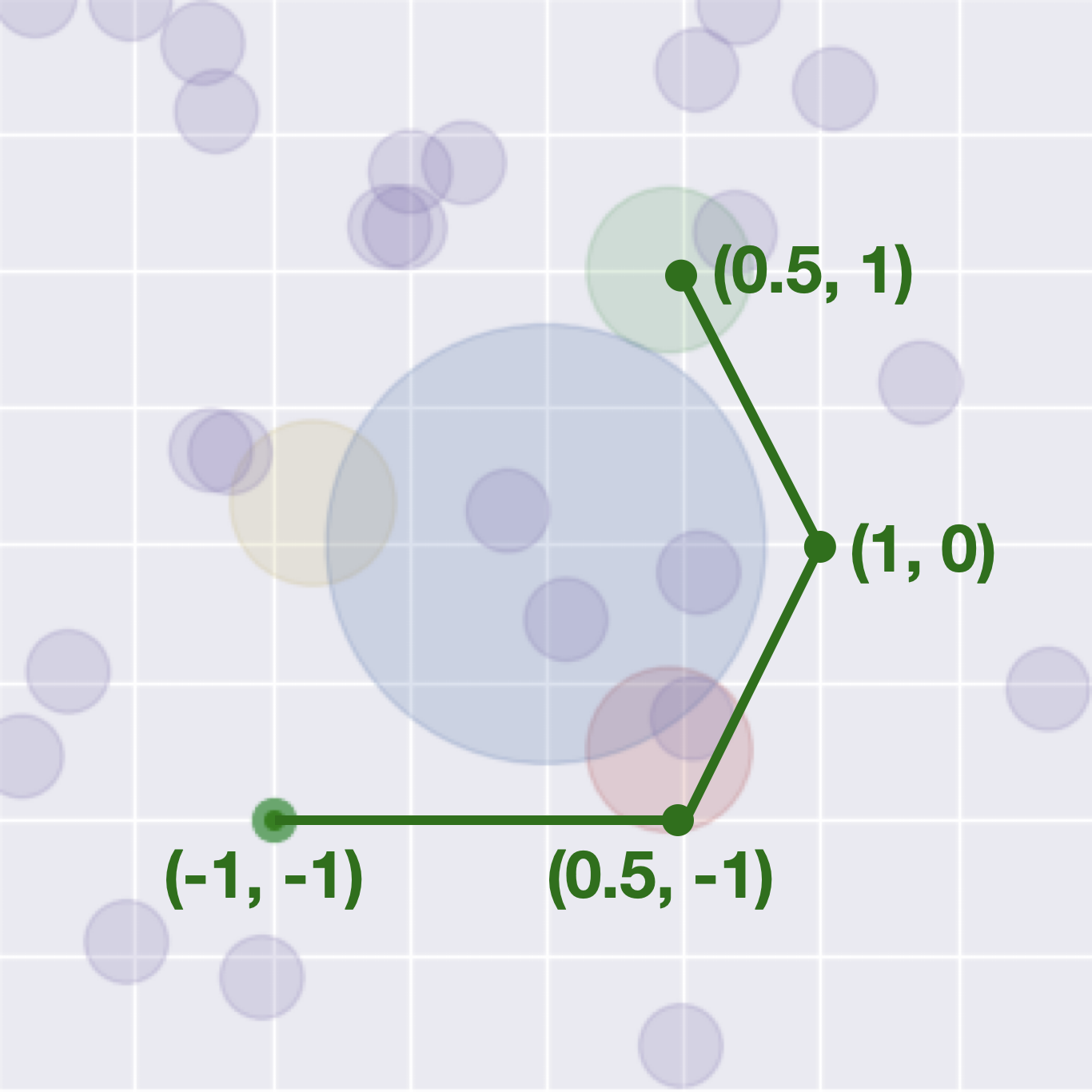}
  \end{center}
  \caption{A toy trajectory in the Flatworld MDP.}
  \label{fig:flatworld_toy_traj}
\vspace{-1em}
\end{wrapfigure}

To use QS, we associate each atomic predicate $x \in \AP$ with a \textit{robustness measure} $f_x: \states \rightarrow \R$ that quantifies how close $x$ is to being satisfied at state $s$. $x$ evaluates to true at a given state iff $f_x(s) \geq c_x$, where $c_x$ is a constant threshold. We can compose variables in $\AP$ with the logical and temporal operators of LTL by introducing QS for each operator, which follows the standard semantics defined for languages like TLTL~\citep{li2017reinforcement} and STL~\citep{maler04stl, fainekos09quant} and is provided in figure~\ref{fig:qs_rules}. An LTL formula $\varphi$ is true if the quantitative evaluation of $\rho_\varphi > 0$ and false otherwise. We also define a maximum and minimum achievable value for $\rho$ in a given MDP as $\rho_{\max}$ and $\rho_{\min}$, respectively.


\begin{figure}
    \centering
    \begin{align*} \centering
    \rho(s_{t: t+k}, \top) &= \rho_{\max}, \\
    \rho(s_{t: t+k}, f_x(s_t) < c_x) &= c_x - f_x(s_t), \\
    \rho(s_{t: t+k}, \neg \varphi) &= -\rho(s_{t: t+k}, \varphi) \\
    \rho(s_{t: t+k}, \varphi \implies \psi ) &= \max(-\rho(s_{t: t+k}, \varphi), \rho(s_{t: t+k}, \psi)) \\
    \rho(s_{t: t+k}, \varphi \& \psi ) &= \min(\rho(s_{t: t+k}, \varphi), \rho(s_{t: t+k}, \psi)) \\
    \rho(s_{t: t+k}, \varphi \| \psi ) &= \max(-\rho(s_{t: t+k}, \varphi), \rho(s_{t: t+k}, \psi)) \\
    \rho(s_{t: t+k}, G(\varphi) ) &= \min_{t' \in [t, t+k)}(\rho(s_{t': t+k}, \varphi)) \\
    \rho(s_{t: t+k}, F(\varphi) ) &= \max_{t' \in [t, t+k)}(\rho(s_{t': t+k}, \varphi)) \\
    \rho(s_{t: t+k}, X(\varphi) ) &= \rho(s_{t+1: t+k}, \varphi) (k > 0) \\
    \rho(s_{t: t+k}, (\varphi U \psi) ) &= \max_{t' \in [t, t+k)}( \min( \rho(s_{t': t+k}, \psi), \min_{t'' \in [t, t')}(\rho(s_{t'':t'}, \varphi)))) \\
    \end{align*}
    \caption{Quantitative Semantics for LTL.}
    \label{fig:qs_rules}
\end{figure}

To better understand quantitative evaluation for a given LTL formula, consider a trajectory (which we will denote as $\xi$) from the Flatworld MDP, shown in figure~\ref{fig:flatworld_toy_traj} and the formula $\varphi_\text{toy} = F(r) \& G(\neg b)$. We can define robustness measures for our atomic propositional variables as $f_x(s) = \text{distance}(s, x_\text{center}) < x_\text{radius}$, requiring that the agent be within a region for its variable to evaluate to true. 

Let's quantitatively evaluate $\varphi_\text{toy}$ on the trajectory $\xi$. For the expression $F(r)$,  we find the maximum quantitative evaluation for the variable $r$ in our trajectory. Since our trajectory visits the red region at point (0.5, -1), the evaluation for $r$ at that point is some positive value $c_r > 0$, so the expression $F(r)$ will evaluate to true. For the expression $G(\neg b)$, we negate the quantitative evaluation and find the minimum value for the negated evaluation of $b$. At the point (1, 0), the agent is closest to the blue region, but since it does not enter it, the minimum value is some positive value $c_b > 0$. The quantitative evaluation for $\varphi_\text{toy}$ on $\xi$ is therefore a positive value $\min(c_r, c_b)$, so $\varphi_\text{toy}$ evaluates to true for $\xi$.

Suppose we were to naively use the quantitative evaluation of a given LTL specification ``off-the-shelf'' as a reward function for an RL agent. Since the quantitative evaluation of a trajectory at any given point in time requires evaluating the future states of the trajectory, we can only assign a reward for entire trajectories. For the toy specification considered in our example, the reward would be the smaller of (1) the maximum distance the agent reaches from the blue region and (2) the minimum distance the agent reaches from the red region during the trajectory. Although this seems reasonable as a reward for our example, the quantitative evaluation of an LTL formula becomes increasingly more obscure as the specification increases in complexity. For example, for the specification defined in Figure~\ref{fig:flatworld_example}, which instructs the agent to indefinitely oscillate amongst the red, green and yellow regions while avoiding blue, $\xi$ would have a lower quantitative evaluation than a trajectory that does not visit any of the three regions but just barely enters the blue region, even though the latter violates the specification without making any qualitative progress. Moreover, there would be no meaningful difference in quantitative evaluation between $\xi$ and a trajectory that visits red, green and yellow while avoiding blue, or between a trajectory that completes the task twice, or thrice, and so on.

The quantitative evaluation of a trajectory as a reward signal for LTL fails because there are no well-defined terminal conditions for evaluating a finite trace under an infinite-horizon specification (e.g., $\taskspec$ from Figure~\ref{fig:flatworld_example}), which means that value produced is often useless when used as a reward signal. This is made worse due to the fact that quantitative evaluation of an LTL formula assigns a single value for an entire trajectory, making credit assignment difficult. 
We propose an alternative reward that avoids these issues, extending the CyclER approach to handle quantitative semantics by rewarding ``progress'' made by traversing individual transitions in $\buchi$.

The high-level intuition for our method is as follows: recall that for a given trajectory, CyclER computes hypothetical rewards for each cycle in $\buchi$, where reward is given if the next transition is taken within that cycle during the trajectory. Our key insight is that we can straightforwardly extend this paradigm to use QS by instead rewarding quantitative progress made towards taking the next transition within an individual cycle. Each transition in $\buchi$ corresponds to a non-temporal predicate of atomic propositions. Crucially, we can quantitatively evaluate these predicates on individual states, rather than trajectories.
When an agent moves to a new state in $\mdp$, we can quantify how close the agent is to satisfying the transition predicate (and therefore taking the transition in the cycle), and compare it to how close the agent was to satisfying the transition predicate in the previous state. If there is a positive difference between these two values, we reward that progress made towards satisfying the transition predicate.

We present our reward function in function~\ref{eq:qs_reward} in the main text.
In the context of Alg.~\ref{alg:cycler_alg}, the reward function defined in \ref{eq:qs_reward} will replace $r_\cycles$. Intuitively, we can interpret the reward function defined in \ref{eq:qs_reward} as identical to $r_\cycles$, but rewarding quantitative progress towards taking a transition in a cycle rather than only offering reward once that transition is taken. We normalize progress using $\rhomax$ and $\rhomin$ to ensure that the scale of rewards from function~\ref{eq:qs_reward} remain consistent.

Let us once again return to the example trajectory in Figure~\ref{fig:flatworld_toy_traj}, with the LTL specification and LDBA from Figure~\ref{fig:flatworld_example} as our objective, and consider the cycle $\{1, 2, 3, 0\}$. We begin in state 1 of $\buchi$, where the transition we aim to take has the corresponding predicate $r$. When we transition from (-1, -1) to (0.5, -1), we satisfy $r$, and receive positive reward from CyclER because we are closer to $r$ than we were in the previous state. We are now in state 2 of $\buchi$ and seek to take the transition with predicate $g$. For the transition from (0.5, -1) to (1, 0), we receive positive reward for getting closer to $g$, and for the transition from (1, 0) to (0.5, 1) we receive positive reward for successfully moving closer to $g$. Note that each transition of our trajectory $\xi$ qualitatively makes progress towards visiting the accepting state of $\buchi$, and this is reflected in the positive rewards assigned by $r_{\text{qs}}$.

In environments where the individual variables in $\AP$ are difficult to satisfy, the usage of QS can offer a more dense reward that still leverages the full expressivity of LTL. In our experiments, we show that using the QS version of CyclER strongly outperforms existing approaches of using QS in learning to satisfy indefinite-horizon LTL specifications~\citep{li2017reinforcement, balakrishna2019bhnr} and enables the learning of LTL-compliant policies in complex environments.

\section{Additional Algorithmic Details}
\begin{wrapfigure}{h}{0.2\textwidth}
\vspace{-1.5em}
  \begin{center}
        \includegraphics[width=0.2\textwidth]{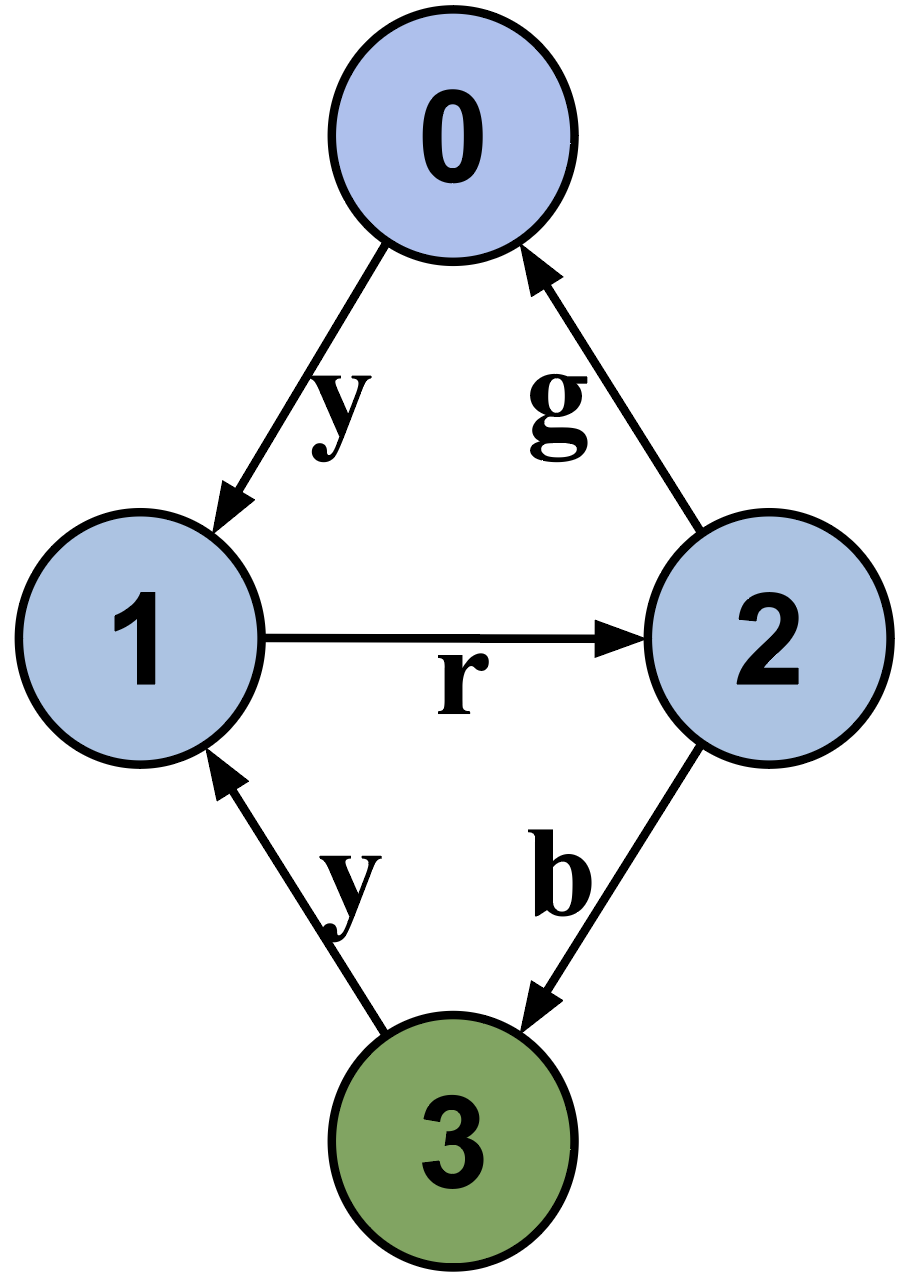}
  \end{center}
  \caption{A partial \buchiword automaton that necessitates a visited frontier.}
  \label{fig:bad_buchi}
\vspace{-1em}
\end{wrapfigure}

\paragraph{Motivation of visiting frontier.} 
To motivate the importance of maintaining the visited frontier $e$ introduced in section~\ref{subsec:cycler_algorithm}, we show via example that the existence of non-accepting cycles in $\buchi$ may allow for trajectories that infinitely take transitions in a MAC or MAIP without ever visiting an accepting state. 

Consider the accepting cycle $\{3, 1, 2\}$ in the partial automaton in Figure~\ref{fig:bad_buchi}. Although this cycle is a MAC, there does exist a separate cycle starting and ending at state 1 (i.e. the cycle $\{1, 2, 0\}$.) If we give reward every time a transition in the cycle $\{3, 1, 2\}$ is taken, a policy may be able to collect infinite reward without ever visiting an accepting state. For example, in Figure~\ref{fig:bad_buchi}, a path $\{1, 2, 0, 1, 2, 0 \dots\}$ would infinitely take transitions in a MAC, and therefore collect infinite reward without ever visiting the accepting state 3. Our visited frontier $e$ will ensure that rewards will only be given once per transition until an accepting state is visited.

\subsection{Finding Minimal Accepting Cycles and Accepting Initial Paths}\label{subsec:macs}
In algorithms~\ref{alg:find_macs} and~\ref{alg:dfs_helper}, we include the psuedocode for finding minimal accepting initial paths and minimal accepting cycles in a given $\buchi$, which constitute the sets $\maips$ and $\cycles$ respectively for usage in algorithm~\ref{alg:cycler_alg}. 

\begin{figure}
\begin{small}
\begin{algorithm}[H]
\SetAlgoLined
\KwIn{LDBA $\buchi$, accepting states set $\buchiaccepts$}
    Initialize $\cycles$ to an empty set\;
    Initialize $\maips$ to an empty set\;
    \textbf{DFS}($b_{-1}$, $\{\}$, $\maips$)\;
    \ForEach{accepting state $b^* \in \buchiaccepts$}{
        Initialize \textit{visited} to an empty set\;
        Initialize $C$ to an empty set\;
        \textbf{DFS}($b^*$, $\{\}$, $C$)\;
        Add $C$ to $\cycles$;
    }
\Return{$\cycles$, $\maips$}
 \caption{Find Minimal Accepting Initial Paths and Cycles (\textbf{FindMAIPsAndMACs})}
      \label{alg:find_macs}
\end{algorithm}
\end{small}
\end{figure}

\begin{figure}
\begin{small}
\begin{algorithm}[H]
\SetAlgoLined
\KwIn{Starting node $b$, Path $p$, set $S$}
Add node $b$ to \textit{visited}\;
    \ForEach{Outgoing transition $(b, \nu, b')$ from $b$}{
        \eIf{$b' \in \buchiaccepts$}
            {
            Add the transition $(b, \nu, b')$ to $p$\;
            Add $p$ to $S$\;
            }
            {
            \If{$b' \notin $ \textit{visited}}{
                Add the transition $(b, \nu, b')$ to $p$\;
                \textbf{DFS}($b'$, $p$, $S$)\;
            }
            }
    }
Remove node $b$ from $visited$\;
 \caption{\textbf{DFS} (Helper for Alg.~\ref{alg:find_macs}}
      \label{alg:dfs_helper}
\end{algorithm}
\end{small}
\end{figure}

\section{Additional Experimental Details}
\label{sec:appendix_experiment_details}

\textbf{Environments and Tasks.} For each random seed of training, the locations of the following objects were randomized and fixed: in FlatWorld, the location of the bonus areas, in ZonesEnv, the locations of all colored regions and hazard regions, and in ButtonsEnv, the locations of the buttons, bonus areas, and gremlins. In ZonesEnv and ButtonsEnv, we use the Point robot from~\citet{ji2023safety} as our agent, which has a 2-dimensional action space $a \in [-1, 1]^2$.

The observation space and environment used in our ZonesEnv are the the default spaces provided the Zones Level 1 environment in~\citet{ji2023safety}, with the following changes: there are additional lidar observations for each of the four colored zones, and we place four static collidable walls as boundaries to enclose the agent's environments at the border of where objects can be randomly placed. The observation space and environment used in our ButtonsEnv experiments are the default spaces provided the Button Level 1 environment in~\citet{ji2023safety}, with two gremlins and eight bonus regions.

In ZonesEnv and ButtonsEnv, we define a simple robustness measure for each atomic propositional variable in the environments (the red, yellow, green, and purple regions in ZonesEnv, and buttons 1 through 4 and gremlin for ButtonsEnv). The robustness measure for a general variable $x$ at a given state $s$ is defined as follows:

$$ f_x(s) = \text{distance}(s, x) \leq 0 $$

For example, if an agent was a distance of 2 units away from the red region in ZonesEnv, the robustness measure of the variable ``red'' at that state would evaluate to -2. For ButtonsEnv, where the gremlin variable refers to multiple moving objects, the robustness measure corresponds to the minimum distance from the agent to any gremlin. We set $\rhomax = 0$ in our environments and $\rhomin$ to be the negative largest distance achievable in each environment.

\begin{wrapfigure}{h}{0.5\textwidth}
    \centering
    \begin{minipage}{0.49\linewidth}
        \centering
        \includegraphics[width=0.95\textwidth]{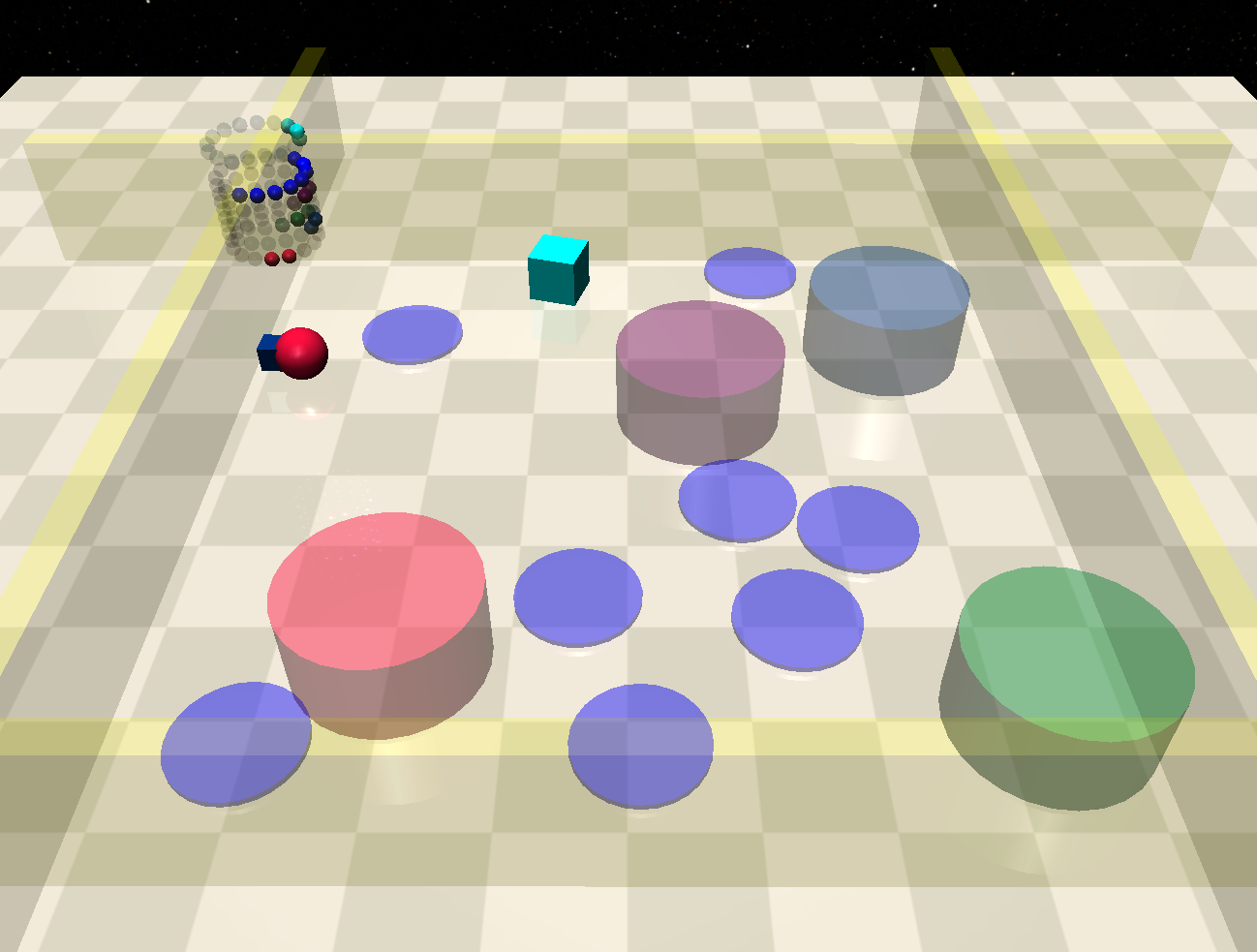}
    \end{minipage}\hfill
    \begin{minipage}{0.49\linewidth}
        \centering
        \par
        \includegraphics[width=0.95\textwidth]{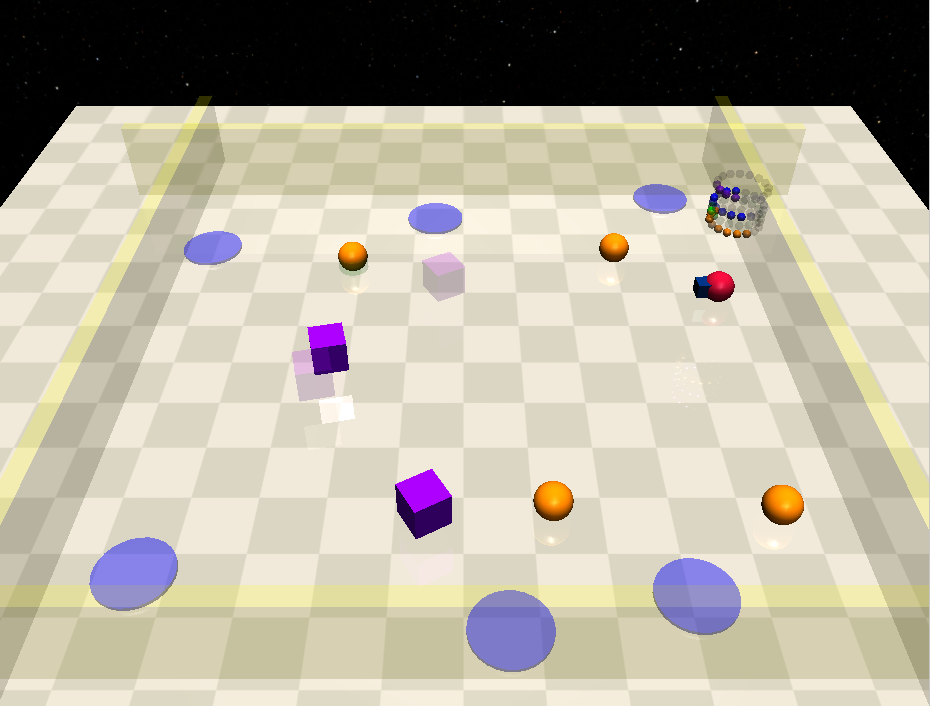}
    \end{minipage}\hfill
    \caption{Example visualizations of the ZonesEnv (left) and ButtonsEnv (right) environments.}
\label{fig:environment_pics}
\end{wrapfigure}

Our LTL task specifications are defined in Table~\ref{tab:env_spec_details}. We use the Spot tool~\cite{duret2022spottool} to convert our specifications into corresponding LDBAs. We report the number of states, transitions, and \textit{MAC}s in each LDBA used in our experiments in Table~\ref{tab:buchi_info_table}.

\textbf{Training details.} For all experiments, results are averaged over five random seeds. We provide hyperparameter choices for PPO for each experiment in Table~\ref{tab:experiment_details} and choices for $\lambda$ in Table~\ref{tab:lambda_values_hp_table}. In Table~\ref{tab:experiment_details}, batch size refers to the number of trajectories. In our PPO implementation, we use a 3-layer, 64-hidden unit network as the actor using ReLU activations, and a 3-layer, 64-hidden unit network architecture with tanh activations in between layers and no final activation function for the critic. The actor outputs the mean of a Gaussian, the variance for which is learned by a 3-layer, 64-hidden unit network that shares the first 2 layers with the actor policy itself. All experiments were done on an Intel Core i9 processor with 10 cores equipped with an NVIDIA RTX A4500 GPU. We use the Adam optimizer in all experiments.

In Figure~\ref{fig:training_plots}, reward was computed by evaluating the policy every ten trajectories in the case of FlatWorld, and every 25 trajectories for ZonesEnv and ButtonsEnv. The $\rltl$ and $\rmdp$ values shown are from averaging performance over ten rollouts for each data point with a smoothing window of size 5.

\begin{table}
    \centering
    \begin{tabular}{|c|c|}
        \hline
        \textbf{Environment} & \textbf{LTL} $\taskspec$\\
        \hline
        FlatWorld &  $G (F(\text{red}) \& X(F(\text{green}) \&  X(F(\text{yellow})))) \& G( \neg \text{blue})$ \\
        \hline
        ZonesEnv & $G (F(\text{blue}) \& F(\text{purple}) \&  F(\text{red}) \& F(\text{green}) ) $  \\
        \hline
        ButtonsEnv & $G (F(\text{button1}) \& F(\text{button2})) \& G( \neg \text{gremlin})$ \\
        \hline
    \end{tabular}
    \vspace{2mm}
    \caption{Specification for each domain.}
    \label{tab:env_spec_details}
\end{table}

\begin{table}
    \centering
    \begin{tabular}{|c|c|c|c|c|}
        \hline
        \textbf{Environment} & \textbf{CyclER} & \textbf{LCER} & \textbf{TLTL} & \textbf{BHNR}\\
        \hline
        FlatWorld & 400 & 1000 & - & -  \\
        \hline
        ZonesEnv & 200 & 1000 & 10 & 1  \\
        \hline
        ButtonsEnv & 100 & 1000 & 10 & 1 \\
        \hline
    \end{tabular}
    \vspace{2mm}
    \caption{Values for $\lambda$ used by baselines for each domain.}
    \label{tab:lambda_values_hp_table}
\end{table}

\begin{table}
    \centering
    \begin{tabular}{|c|c|c|c|}
        \hline
        \textbf{Environment} & States & Edges & \textit{MAC}s\\
        \hline
        FlatWorld & 5 & 18 & 14 \\
        \hline
        ZonesEnv & 8 & 35 & 44 \\
        \hline
        ButtonsEnv & 3 & 9 & 4 \\
        \hline
    \end{tabular}
    \vspace{2mm}
    \caption{Details for the LDBAs used in each experimental  domain.}
    \label{tab:buchi_info_table}
\end{table}

\begin{table}
    \centering
    \begin{tabular}{|c|c|c|c|c|c|c|c|}
        \hline
        Environment & Critic LR & Actor LR & $\alpha$ & Update freq. & $\gamma$ & Batch size & $|\tau|$ (training)\\
        \hline
        FlatWorld & 0.001 & 0.0003 & - & 1 & 0.98 & 128 & 120\\
        \hline
        ZonesEnv & 0.0125 & 0.0025 & 0.3 & 3 & 0.99 & 128 & 700 \\
        \hline
        ButtonsEnv & 0.001 & 0.0003 & 0.2 & 3 & 0.99 & 128 & 750 \\
        \hline
    \end{tabular}
    \vspace{2mm}
    \caption{Hyperparameters used during training for each domain.}
    \label{tab:experiment_details}
\end{table}

For the BHNR baseline, we use a partial signal window size of 60 for FlatWorld, 700 for ZonesEnv, and 750 for ButtonsEnv, treating this value as a hyperparameter and performing a sweep to select the window size.
Our TLTL baseline is trained by assigning the TLTL value of a trajectory as the reward at the end of the trajectory, and using the discounted reward-to-go for each prior timestep as the reward signal.
For our TLTL and BHNR baselines, we tried computing $\rltl$ in two ways: first, we tried using the original quantitative evaluation of the formula as the reward, where the resulting rewards were mostly negative due to how we defined our robustness measures for each variable. To evaluate if the sign and magnitude of the reward caused difficulty during learning, we normalized the quantitative evaluation done in TLTL and BHNR using $\rhomin$ and $\rhomax$, so that the reward value would be between 0 and 1. We found that this adjustment did not have a significant impact on either baseline's ability to learn $\rltl$, but it did allow for easier optimization of $\rmdp$; therefore, we report the results from the normalized evaluation in our experiments.

\end{document}